\newcommand{\INPUT}{\item[\textbf{input}]}
\newcommand{\OUTPUT}{\item[\textbf{output}]}
\theoremstyle{plain}
\newtheorem{theorem}{Theorem}
\crefname{thm}{theorem}{theorems}
\newtheorem{corollary}[theorem]{Corollary}
\theoremstyle{definition}
\newtheorem{definition}[theorem]{Definition}
\newtheorem{assumption}[theorem]{Assumption}
\theoremstyle{remark}
\newcommand\cR{\mathcal{R}}
\newcommand\R{\mathbb{R}}
\renewcommand\P{\mathbb{P}}
\newcommand\E{\mathbb{E}}
\DeclareMathOperator*{\argmin}{argmin}
\newcommand{\iid}{\textnormal{iid}}
\newcommand{\ind}{\textnormal{ind}}
\newcommand{\simiid}{\stackrel{\iid}{\sim}}
\newcommand{\simind}{\stackrel{\ind}{\sim}}
\newcommand{\ca}{\mathcal{A}}
\newcommand{\alg}{\mathcal{A}}
\newcommand{\cd}{\mathcal{D}}
\newcommand{\cq}{\mathcal{Q}}
\newcommand{\cx}{\mathcal{X}}
\newcommand{\cy}{\mathcal{Y}}
\newcommand{\p}{p}
\newcommand{\q}{q}
\newcommand{\seqn}{\textnormal{seq}_{\left[n\right]}}
\newcommand{\seqnn}{\textnormal{seq}_{\left[n-1\right]}}
\newcommand{\pred}{\hat{\cy}}
\newcommand{\Mod}[1]{\ (\mathrm{mod}\ #1)}
\title{Bagging Provides Assumption-free Stability}
\author[1]{Jake A. Soloff}
\author[1]{Rina Foygel Barber}
\author[1,2]{Rebecca Willett}
\affil[1]{Department of Statistics, University of Chicago}
\affil[2]{Department of Computer Science, University of Chicago}
\date{\today}
\begin{document}
\maketitle

\begin{abstract}
Bagging is an important technique for stabilizing machine learning models. In this paper, we derive a finite-sample guarantee on the stability of bagging for any model. 
Our result places no assumptions on the distribution of the data, on the properties of the base algorithm, or on the dimensionality of the covariates. Our guarantee applies to many variants of bagging and is optimal up to a constant. Empirical results validate our findings, showing that bagging successfully stabilizes even highly unstable base algorithms.
\end{abstract}

\section{Introduction}\label{sec-intro}

Algorithmic stability---that is, how perturbing training data influences a learned model---is fundamental to modern data analysis. In learning theory, certain forms of stability are necessary and sufficient for generalization \citep{bousquet2002stability, poggio2004general, shalev2010learnability}. 
In model selection, stability measures can reliably identify important features \citep{meinshausen2010stability,shah2013variable,ren2021derandomizing}.  
In scientific applications, stable methods promote reproducibility, a prerequisite for meaningful inference \citep{yu2013stability}. In distribution-free prediction, stability is a key assumption for the validity of jackknife (that is, leave-one-out cross-validation) prediction intervals \citep{barber2021predictive, steinberger2023conditional}. 

Anticipating various benefits of stability, \citet{breiman1996bagging,breiman1996heuristics} proposed bagging as an ensemble meta-algorithm to stabilize any base learning algorithm. Bagging, short for {\bf b}ootstrap {\bf agg}regat{\bf ing}, refits the base algorithm to many perturbations of the training data and averages the resulting predictions. Breiman's vision of bagging as off-the-shelf stabilizer motivates our main question: \emph{How stable is bagging on an arbitrary base algorithm, placing no assumptions on the data generating distribution?}
In this paper, we first answer this question for the case of base algorithms with bounded outputs and then show extensions to the unbounded case.

\begin{figure}[t]
\begin{center}
\includegraphics[width=.6\columnwidth]{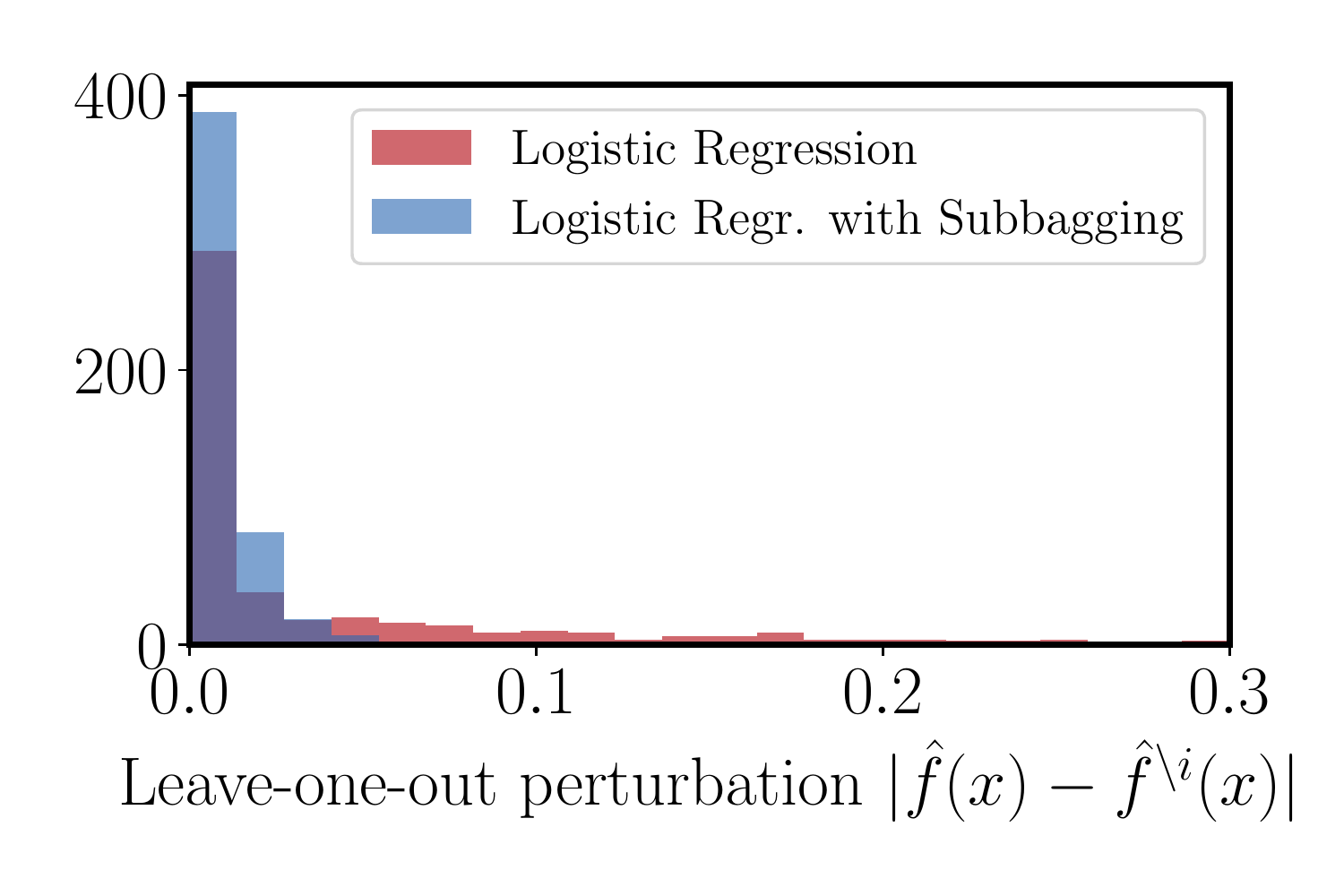}
\vskip -.1in
\caption{Distribution of leave-one-out perturbations for logistic regression (red) and subbagged logistic regression (blue), with $n=500$ and $d=200$. (See Section~\ref{sec-experiments} for details on this simulation.) 
}
\label{fig-double-hist}
\end{center}
\end{figure}
\subsection{Preview of Main Results}

We study the following notion of algorithmic stability:
\begin{definition}[Stability---informal version]\label{def:intro-stability}
    An algorithm is $\left(\varepsilon, \delta\right)$-stable if, for any training data set $\cd$ with $n$ data points, and any test point $x$,
    \begin{align}\label{eq:intro-avg-case-stability}
        \frac{1}{n}\sum_{i=1}^n\mathbf{1}\left\{\left|\hat{f}\left(x\right) - \hat{f}^{\setminus i}\left(x\right)\right| > \varepsilon\right\}\le \delta,
    \end{align}
     where $\hat{f}$ is the model trained on the entire data set $\cd$, while $\hat{f}^{\setminus i}$ is trained on the data set $\cd$ with the $i$th data point removed.
\end{definition}
In other words, this definition requires that, for {\em any} data set, if we drop one training point at random, then the resulting prediction produced by the algorithm is typically insensitive to this perturbation of the training data. 

It is well known that, empirically, bagging and other ensembling procedures tend to improve the stability of an unstable base algorithm. For example, Figure~\ref{fig-double-hist} shows the histograms of  leave-one-out perturbations $\left|\hat{f}\left(x\right) - \hat{f}^{\setminus i}\left(x\right)\right|$ for two different algorithms: logistic regression and logistic regression with subbagging (given by 
$\hat{f}_B\left(x\right):=\frac{1}{B}\sum_{b=1}^B \hat{f}^{\left(b\right)}\left(x\right)$, where each $\hat{f}^{\left(b\right)}$ is a model fitted on $m=n/2$ out of $n$ training data points  sampled at random without replacement). We can clearly see that this perturbation is often far larger for logistic regression than for its subbagged version. 

In this paper, we prove that stability (in the sense of Definition~\ref{def:intro-stability}) is {\em automatically} achieved by the bagged version of any algorithm---with no assumptions on either the algorithm itself or on the training and test data, aside from requiring that the output predictions lie in a bounded range. A special case of our main result can be informally summarized as follows: 
\begin{theorem}[Main result---informal version]\label{thm:informal}
Fix any algorithm with bounded output, and consider its subbagged version with $m$ samples drawn without replacement, 
\[\hat{f}_B\left(x\right):=\frac{1}{B}\sum_{b=1}^B \hat{f}^{\left(b\right)}\left(x\right),\] 
where $B$ is sufficiently large. 
Then the subbagged algorithm satisfies Definition~\ref{def:intro-stability} for any pair $\left(\varepsilon, \delta\right)$ satisfying
\begin{equation}\label{eqn-thm:informal}
\delta\varepsilon^2\gtrsim \frac{1}{n}\cdot\frac{p}{1-p}
\end{equation}
where $p = \frac{m}{n}$.
\end{theorem}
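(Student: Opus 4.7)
The plan is to first analyze the infinite-$B$ idealization and recover the finite-$B$ case at the end by a standard concentration step. Fix a test point $x$ and write $g(S) := \hat f^{(S)}(x)$, with $S$ uniform over size-$m$ subsets of $[n]$; then the idealized bagged predictor is $\hat f(x) = \mathbb{E}[g(S)]$, and by symmetry of the draw, $\hat f^{\setminus i}(x) = \mathbb{E}[g(S) \mid i \notin S]$ (conditioning on $i\notin S$ makes $S$ uniform over size-$m$ subsets of $[n]\setminus\{i\}$). A one-line conditioning identity then gives
\[
\hat f(x) - \hat f^{\setminus i}(x) \;=\; p\bigl(\mathbb{E}[g(S)\mid i\in S] - \mathbb{E}[g(S)\mid i\notin S]\bigr) \;=\; \frac{\mathrm{Cov}\bigl(g(S),\,\mathbf{1}_{i\in S}\bigr)}{1-p}.
\]
Squaring and applying Markov's inequality to the average indicator in Definition~\ref{def:intro-stability} reduces the stability claim to bounding the total squared covariance $\sum_{i=1}^n \mathrm{Cov}\bigl(g(S),\mathbf{1}_{i\in S}\bigr)^2$ by something of order $\tfrac{np(1-p)}{n-1}\,\mathrm{Var}(g(S))$.

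This last inequality is the heart of the proof and the main obstacle. The key observation is that $\sum_i\mathbf{1}_{i\in S} = m$ is constant, so the centered indicators $\{\mathbf{1}_{i\in S}-p\}_{i=1}^n$ span only an $(n-1)$-dimensional subspace $V\subset L^2$, with Gram matrix and pseudoinverse
\[
\Sigma \;=\; \tfrac{np(1-p)}{n-1}\bigl(I - \tfrac{1}{n}J\bigr), \qquad \Sigma^+ \;=\; \tfrac{n-1}{np(1-p)}\bigl(I - \tfrac{1}{n}J\bigr).
\]
The covariance vector $v := (\mathrm{Cov}(g(S),\mathbf{1}_{i\in S}))_{i=1}^n$ lies in $\mathbf{1}^\perp$ (again because $\sum_i\mathbf{1}_{i\in S}$ is constant), so projecting $g(S)-\mathbb{E}[g(S)]$ onto $V$ and using that the projection has squared norm at most $\mathrm{Var}(g(S))$ yields $\tfrac{n-1}{np(1-p)}\|v\|^2 = v^\top\Sigma^+v \le \mathrm{Var}(g(S))$, which is the desired Bessel/Poincar\'e-type inequality for sampling without replacement. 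Specializing via $\mathrm{Var}(g(S))\le\tfrac14$ (rescale the bounded outputs to $[0,1]$) then produces
\[
\frac{1}{n}\sum_{i=1}^n \mathbf{1}\bigl\{|\hat f(x)-\hat f^{\setminus i}(x)|>\varepsilon\bigr\} \;\le\; \frac{1}{n\varepsilon^2(1-p)^2}\cdot\tfrac{np(1-p)}{4(n-1)} \;=\; \frac{p}{4(n-1)(1-p)\,\varepsilon^2},
\]
which is $\le\delta$ exactly when $\delta\varepsilon^2\gtrsim\tfrac{1}{n}\cdot\tfrac{p}{1-p}$, as claimed.

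Finally, to pass from the $B=\infty$ idealization to large but finite $B$, I would invoke a Hoeffding-style concentration bound to show that $\hat f_B(x)$ approximates $\hat f(x)$ at rate $O(1/\sqrt{B})$ with high probability (and likewise for each leave-one-out version), so that for $B$ sufficiently large this Monte Carlo error can be absorbed into the tolerance $\varepsilon$ at negligible cost to the final threshold. I expect the Bessel/Gram-matrix step to be the delicate part, since it leverages the specific negative-correlation structure of sampling $m$ out of $n$ without replacement; bagging variants using independent Bernoulli subsampling or sampling with replacement should admit an analogous but strictly simpler projection argument.
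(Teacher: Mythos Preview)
Your proof is correct and yields the same constant $\tfrac{1}{4(n-1)}\cdot\tfrac{p}{1-p}$ as the paper. The overall architecture matches---covariance identity for $\hat f-\hat f^{\setminus i}$, then a variance/Cauchy--Schwarz step, then Hoeffding for finite $B$---but the central inequality is organized differently. The paper defines the ``bad set'' $\mathcal K=\{i:|\hat f-\hat f^{\setminus i}|>\varepsilon\}$, writes $|\mathcal K|\,\varepsilon\le\sum_{i\in\mathcal K}s_i(\hat f-\hat f^{\setminus i})$ with signs $s_i\in\{\pm1\}$, expresses this sum as $\tfrac{1}{1-p}\,\mathbb E_r\bigl[(\hat y-\hat y^{(r)})\sum_i s_i\mathbf 1_{i\notin r}\bigr]$, and applies Cauchy--Schwarz together with a direct computation of $\mathrm{Var}\bigl(\sum_i s_i\mathbf 1_{i\notin r}\bigr)$ (where the negative-covariance parameter $q$ appears explicitly). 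You instead pass through Markov on the full $\ell_2$ sum $\sum_i(\hat f-\hat f^{\setminus i})^2$ and bound it via a Bessel/projection inequality using the Gram matrix $\Sigma=\tfrac{np(1-p)}{n-1}(I-\tfrac1nJ)$ and its pseudoinverse. Your route makes the role of the negative correlation transparent through the spectral structure of $\Sigma$ and cleanly explains why the Bernoulli/Poissonized variants (where $\Sigma$ is diagonal) are strictly simpler; the paper's sign-vector/Cauchy--Schwarz route extends more directly to arbitrary resampling distributions satisfying only $q\ge 0$, without needing the Gram matrix to have a clean pseudoinverse. The paper's appendix in fact also proves the $\ell_2$ bound (as the $k=2$ case of a general $\ell_k$ result) via a self-bootstrapping Cauchy--Schwarz rather than your explicit pseudoinverse argument, and the two computations are equivalent.
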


(The formal version of \cref{thm:informal}, including many other forms of bagging, can be found in Section~\ref{sec-guarantees} below. We extend our main result to the unbounded case in \cref{sec-extensions}.) 

In the existing literature, relatively little is known about bagging's stabilization properties without additional assumptions on the base algorithm.\footnote{We defer a more extensive discussion of prior work in this area to \cref{sec-related}.} In this work, our stability guarantees (previewed in \cref{thm:informal}) will:
\begin{itemize}
\item apply to general base algorithms which may be highly unstable,
\item hold for finite sample sizes,
\item provide bounds that are optimal (up to constants), and
\item hold deterministically, allowing for out-of-distribution test points and non-exchangeable data.
\end{itemize}

\section{Algorithmic Stability}\label{sec-stability}

Consider a supervised learning setting with real responses. 
Formally, a learning algorithm~$\ca$ is a function that inputs a data set~$\cd = \left(Z_i\right)_{i=1}^n$ of pairs $Z_i = \left(X_i, Y_i\right)$ of covariates~$X_i\in \cx$ and responses~$Y_i\in \cy$ and an auxiliary random variable $\xi \sim \textnormal{Unif}\left(\left[0,1\right]\right)$ and produces a fitted regression function~$\hat{f} : \cx\to \pred$, given by $\hat{f} = \ca\Big(\cd; \xi\Big)$. 
While many results in the literature consider only symmetric algorithms $\alg$ (i.e., invariant to the ordering of the $n$ training points in $\cd$), here we do not constrain $\alg$ to be symmetric. 

The auxiliary random variable~$\xi$ may be viewed as a random seed, allowing for randomization in $\ca$, if desired.  For example, in many applications, we may wish to optimize an objective such as empirical risk, and then the resulting algorithm $\ca$ consists of the specific numerical operations applied to the training data---for example,~$T$ steps of stochastic gradient descent (SGD) with a specific learning rate and batch size, with the random seed $\xi$ used for drawing the random batches in SGD.
Our notation also allows for deterministic algorithms, since~$\ca$ is free to ignore the input argument $\xi$ and depend only on the data.

There are many ways to define the stability of a learning algorithm. As noted by \citet{shalev2010learnability}, every definition of stability quantifies the sensitivity of the output of $\ca$ to small changes in the training set $\cd$, but they all define `sensitivity of the output' and `small changes in the training set' differently. We present our main results for two definitions of stability and extend our results to many related notions in \cref{sec-alt-frameworks}. One of the strongest possibilities is to require, for all data sets and all test points, that every prediction be insensitive to dropping any single observation. The following definition is closely related to \emph{uniform prediction stability} \citep[see, e.g.,][]{dwork2018privacy}.
\begin{definition}\label{def:worst-case-stability}
    An algorithm~$\ca$ is worst-case $\left(\varepsilon, \delta\right)$-stable if, for all data sets~$\cd = \left(Z_i\right)_{i=1}^n$ and test points~$x\in \cx$,
    \begin{align}\label{eq:worst-case-stability}
        \max_{i\in \left[n\right]}\P_\xi\left\{\left|\hat{f}\left(x\right) - \hat{f}^{\setminus i}\left(x\right)\right| > \varepsilon\right\}\le \delta,
    \end{align}
    where $\hat{f} = \ca\left(\cd; \xi\right)$, $\hat{f}^{\setminus i} = \ca\left(\cd^{\setminus i}; \xi\right)$ and $\cd^{\setminus i} = \left(Z_j\right)_{j\ne i}$. 
\end{definition}

In many settings, however, this requirement is too stringent, since it forces $\ca$ to be stable even when the most influential observation is dropped. A relaxation of this definition is the notion of average-case stability, where the perturbation comes from dropping one observation at random.
Since we are primarily interested in average-case stability in this paper, we refer to it simply as `$\left(\varepsilon, \delta\right)$-stable'. 
\begin{definition}\label{def:average-case-stability} An algorithm~$\ca$ is $\left(\varepsilon, \delta\right)$-stable if, for all data sets~$\cd = \left(Z_i\right)_{i=1}^n$ and test points~$x\in \cx$,
    \begin{align}\label{eq:avg-case-stability}
        \frac{1}{n}\sum_{i=1}^n\P_\xi\left\{\left|\hat{f}\left(x\right) - \hat{f}^{\setminus i}\left(x\right)\right| > \varepsilon\right\}\le \delta,
    \end{align}
    where $\hat{f} = \ca\left(\cd; \xi\right)$, $\hat{f}^{\setminus i} = \ca\left(\cd^{\setminus i}; \xi\right)$ and $\cd^{\setminus i} = \left(Z_j\right)_{j\ne i}$. 
\end{definition}
This is the formal version of Definition~\ref{def:intro-stability}, stated informally earlier---the difference here (aside from introducing notation) lies in the presence of the randomization term $\xi$.

The terminology~`\emph{$\ca$ is $\left(\varepsilon, \delta\right)$-stable}' in Definition~\ref{def:average-case-stability} (or `$\ca$ is worst-case $\left(\varepsilon, \delta\right)$-stable, in Definition~\ref{def:worst-case-stability}) suppresses the dependence on the sample size~$n$. Since we are performing a non-asymptotic stability analysis, we treat~$n$ as a fixed positive integer throughout.

Clearly, Definition~\ref{def:average-case-stability} is implied by Definition~\ref{def:worst-case-stability}, but not vice versa; average-case stability thus relaxes worst-case stability to allow some small fraction of observations to have large leave-one-out perturbation.  Average-case stability is a more permissive condition, and yet it is often sufficient for statistical inference. Indeed, if data points $Z_i$ are exchangeable and~$\ca$ is symmetric, Condition~\eqref{eq:avg-case-stability} implies
\[
\P_{\cd, X_{n+1}, \xi}\left\{\left|\hat{f}\left(X_{n+1}\right) - \hat{f}^{\setminus i}\left(X_{n+1}\right)\right| > \varepsilon\right\}\le \delta
\]
for all $i$ and for a new test point $X_{n+1}$,
which is the condition of ``out-of-sample stability'' used by papers on distribution-free prediction mentioned in Section~\ref{sec-intro} (when $Z_{n+1}=\left(X_{n+1}, Y_{n+1}\right)$ is a new test point that is exchangeable with the training data). Of course, our definition is a stronger property, as it is required to hold uniformly over any training set and any test point. 

In fact,
 worst-case stability ensures an even stronger property---for a training sample $Z_1,\dots,Z_n$, it must hold that
 \[
\P_{\cd,\xi}\left\{\left|\hat{f}\left(X_i\right) - \hat{f}^{\setminus i}\left(X_i\right)\right| > \varepsilon\right\}\le \delta
\]
which is sometimes known as ``in-sample stability''; informally, this bound implies that $\hat{f}$ is not ``overfitted'' to the training data, since its prediction $\hat{f}\left(X_i\right)$ at the $i$th training point is only slightly influenced by $Z_i$. 
On the other hand, average-case stability is not sufficient to ensure this type of bound, even on average over $i=1,\ldots,n$.

In many lines of the literature, it is more standard to define stability with respect to a loss function~$\ell\left(\hat{f}\left(x\right), y\right)$. Define $\left(\varepsilon, \delta\right)$-stability with respect to the loss~$\ell$ as
\begin{align*}
\frac{1}{n}\sum_{i=1}^n\P_\xi\left\{\left|\ell\left(\hat{f}\left(x\right),y\right) - \ell\left(\hat{f}^{\setminus i}\left(x\right), y\right)\right| > \varepsilon\right\}\le \delta.
\end{align*}
If an algorithm $\ca$ is $\left(\varepsilon, \delta\right)$-stable in the sense of~Definition~\ref{def:average-case-stability}, then it is $\left(\varepsilon/L,\delta\right)$-stable with respect to any loss function~$\ell$ that is~$L$-Lipschitz in its first argument. Hence, our stability guarantees immediately apply to any Lipschitz loss.\footnote{Assuming the loss~$\ell$ is Lipschitz is standard in the literature---see, for example, \citet{elisseeff2005stability,hardt2016train}.}

Similarly, in the stability literature, it is more standard to control the expected value of the leave-one-out perturbation~$\left|\hat{f}\left(x\right) - \hat{f}^{\setminus i}\left(x\right)\right|$ rather than controlling a tail probability. However, tail bounds can be easily converted to bounds in expectation using the standard identity
\[
\E\left|\hat{f}\left(x\right) - \hat{f}^{\setminus i}\left(x\right)\right|
= \int_0^\infty \P\left\{\left|\hat{f}\left(x\right) - \hat{f}^{\setminus i}\left(x\right)\right| > \varepsilon\right\}\text{d}\varepsilon.
\] 
In \cref{sec-alt-frameworks}, we define various related notions of stability more formally, and consider the implications of our main result for these alternative definitions of stability.

\section{Bagging and its Variants}\label{sec-main}

Bagging has a rich history in the machine learning literature \citep{breiman1996bagging,dietterich2000ensemble,valentini2002ensembles} and is widely used in a variety of practical algorithms; random forests are a notable example \citep{breiman2001random}. 

The theoretical properties of bagging have also been widely explored.
For example, the stabilization properties of bagging have been 
studied for some specific base algorithms, such as trees \citep{basu2018iterative} or $k$-means \citep{ben2007stability}. \cite{poggio2002bagging} compared the stabilizing properties of bagging to those of ridge regression; \citet{lejeune2020implicit,lejeune2022asymptotics} recently established some deeper connections between the asymptotic risks of bagging and ridge regression (see also \citet{patil2023bagging}). \citet{larsen2023bagging} showed bagging (where the base algorithm is empirical risk minimization) achieves optimal sample complexity for PAC learning in the realizable setting. Additional prior works have addressed the stability of bagging by proving that bagging, under certain conditions, increases the stability of an already stable algorithm \citep{elisseeff2005stability}. In contrast, our results establish stability for bagging when applied to an arbitrary (and possibly highly unstable) base algorithm. We defer a more detailed discussion of prior work in this area to \cref{sec-related}, where we can more fully compare to our own results.

Bagging applies resampling methods to reduce variance, smooth discontinuities, and induce stability in a base algorithm~$\ca$. The meta-algorithm repeatedly samples `bags' from the training data~$\cd$, runs the base algorithm~$\ca$ on each bag, and averages the resulting models. Different resampling methods lead to some common variants:
\begin{itemize}
    \item Classical bagging \citep{breiman1996bagging,breiman1996heuristics} samples $m$ indices with replacement from $\left[n\right]=\left\{1,\dots,n\right\}$. 
    \item Subbagging \citep{andonova2002simple} samples~$m$ indices without replacement from~$\left[n\right]$ (where $m\leq n$).
\end{itemize}

We distinguish `classical bagging,' which employs sampling with replacement, from the more general `bagging,' which we use to refer to any resampling method. For classical bagging, \citet{breiman1996bagging,breiman1996heuristics} originally proposed using the nonparametric bootstrap \citep{efron1979bootstrap}, that is, $m=n$, but $m\ll n$ is often computationally advantageous. 

In both of the above strategies, because there are exactly~$m$ observations in each bag, there is a weak negative correlation between observations (that is, between the event that data point $i$ is in the bag, and that data point $j$ is in the bag). Randomizing the size of each bag is a standard trick that decorrelates these events:

\begin{itemize}
    \item Poissonized bagging \citep{oza2001online, agarwal2014knowing} samples $M$ indices with replacement from~$\left[n\right]$, where~$M\sim \textnormal{Poisson}\left(m\right)$.
    \item Bernoulli subbagging \citep{harrington2003online} samples $M$ indices without replacement from~$\left[n\right]$, where $M\sim \textnormal{Binomial}\left(n, \frac{m}{n}\right)$.
\end{itemize}

Our stability results are quite flexible to the choice of resampling method, and in particular, apply to all four methods described above. In order to unify our results, we now  present a generic version of bagging that includes all four of these variants as special cases.

\subsection{Generic Bagging}

Bagging is a procedure that converts any base algorithm $\ca$ into a new algorithm, its bagged version~$\widetilde\ca_B$.
Define
\[\seqn\coloneqq \left\{\left(i_1, \ldots, i_k\right) : k\ge 0, i_1,\ldots,i_k\in \left[n\right]\right\},\]
which is the set of finite sequences (of any length) consisting of indices in~$\left[n\right]$. We refer to any $r\in\seqn$ as a ``bag''. 
Let $\mathcal{Q}_n$ denote a distribution on $\seqn$. For example, subbagging~$m$ out of~$n$ points corresponds to the uniform distribution over the set of length-$m$ sequences~$\left(i_1,\ldots,i_m\right)$ with distinct entries. Given a bag $r=\left(i_1,\ldots,i_m\right)\in\seqn$ and a data set $\cd=\left(Z_1,\dots,Z_n\right)$, define a new data set $\cd_r= \left(Z_{i_1},\dots,Z_{i_m}\right)$ selecting the data points according to the bag $r$.

\begin{algorithm}
   \caption{Generic Bagging $\widetilde\ca_B$}
   \label{alg:bagging}
    \begin{algorithmic}
   \INPUT Base algorithm $\ca$; data set $\cd$ with $n$ training points; number of bags $B\geq 1$; resampling distribution $\cq_n$
   \FOR{$b=1,\dots,B$}
   \STATE Sample bag $r^{\left(b\right)} = \left(i_1^{\left(b\right)},\ldots,i_{n_b}^{\left(b\right)}\right)\sim\cq_n$
   \STATE Sample seed $\xi^{\left(b\right)} \sim \textnormal{Unif}\left(\left[0,1\right]\right)$
   \STATE Fit model $\hat{f}^{\left(b\right)} = \ca\left(\cd_{r^{\left(b\right)}};\xi^{\left(b\right)}\right)$
   \ENDFOR
   \OUTPUT Averaged model $\hat{f}$ defined by
   \[
   \textstyle
    \hat{f}_B\left(x\right) = \frac{1}{B}\sum_{b=1}^B \hat{f}^{\left(b\right)}\left(x\right)\] 
\end{algorithmic}
\end{algorithm}

To construct the bagged algorithm $\widetilde\ca_B$ using $\ca$ as a base algorithm, we first draw bags $r^{\left(1\right)},\dots,r^{\left(B\right)}$ from the resampling distribution $\cq_n$, then fit a model on each bag using $\alg$, and average the resulting models for the final fitted function. Algorithm~\ref{alg:bagging} summarizes this procedure.

Generic bagging treats the base algorithm~$\ca$ as a `black-box,' in that it only accesses the base algorithm by querying it on different training sets and different random seeds. We write $\widetilde\ca_B$ to denote the resulting algorithm obtained by applying generic bagging with $\alg$ as the base algorithm.

Our theoretical analysis of bagging is simplified by considering an idealized version of generic bagging as the number of bags~$B$ tends to infinity. Our tactic is to directly study the stability of this large-$B$ limit, and then derive analogous results for~$\widetilde\ca_B$ using simple concentration inequalities. To facilitate our theoretical analysis, we define in Algorithm~\ref{alg:bagging-derand} the limiting version of generic bagging.

\begin{algorithm}
   \caption{Derandomized Bagging $\widetilde\ca_\infty$}
   \label{alg:bagging-derand}
    \begin{algorithmic}
   \INPUT Base algorithm $\ca$; data set $\cd$ with $n$ training points; resampling distribution $\cq_n$
   \OUTPUT Averaged model $\hat{f}_\infty$ defined by
   \[\hat{f}_\infty\left(x\right) = \E_{r,\xi}\left[\ca\left(\cd_r; \xi\right)\left(x\right)\right]\]
   where the expectation is taken with respect to $r\sim\cq_n$ and $\xi\sim\textnormal{Unif}\left(\left[0,1\right]\right)$.
\end{algorithmic}
\end{algorithm}

Note that the algorithm~$\ca$ may be a randomized algorithm, but derandomized bagging averages over any randomness in $\ca$ (coming from the random seed $\xi$) as well as the randomness of the bags drawn from $\cq_n$.\footnote{For $\widetilde\ca_\infty$ to be defined, we assume that the expectation $\E_{r,\xi}\left[\ca\left(\cd_r; \xi\right)\left(x\right)\right]$ exists for all data sets $\cd$ and test points~$x$.} For instance, the derandomized form of classical bagging averages uniformly over~$n^m$ possible subsets of the data; in practice,  since we generally cannot afford $n^m$ many calls to $\ca$, we would instead run classical bagging with some large $B$ as the number of randomly sampled bags.

\subsection{The Resampling Distribution \texorpdfstring{$\cq_n$}{}}
To simplify the statement of the main results, we make a symmetry assumption on the resampling method~$\cq_n$. All the variants we have described above (classical bagging, subbagging, Poissonized bagging, Bernoulli subbagging) satisfy this assumption. 

\begin{assumption}\label{assumption:symmetry} The resampling method~$\cq_n$ satisfies 
    \[\cq_n\left\{\left(i_1,\ldots,i_m\right)\right\} = \cq_n\left\{\left(\sigma\left(i_1\right),\ldots,\sigma\left(i_m\right)\right)\right\},\] 
    for all $m$, $i_1,\ldots,i_m\in \left[n\right]$, and permutations $\sigma\in \mathcal{S}_n$. 
\end{assumption}

Intuitively, this symmetry assumption requires the bagging algorithm to treat the indices $\left(1,\ldots,n\right)$ as exchangeable (for example, bags $\left(1,2,2\right)$ and $\left(3,4,4\right)$ are equally likely). 

Different bagging methods attain different degrees of stability. For instance, consider a degenerate case where~$\cq_n$ returns a random permutation of~$\left(1,\ldots,n\right)$ (that is, subbagging with $m=n$). Then~$\hat{f}_\infty$ is simply the result of running the base algorithm on shuffled versions of the data. In this case, the bagged algorithm is only as stable as the base algorithm. Our bounds on the stability of bagging depend on specific parameters of the resampling method~$\cq_n$.

\begin{definition}\label{def:p-q} For~$\cq_n$ satisfying Assumption~\ref{assumption:symmetry}, let
\begin{equation}
    \begin{aligned}
        \p &\coloneqq \P_{r\sim \cq_n}\left\{i\in r\right\}, \\
        \q &\coloneqq -\textnormal{Cov}_{r\sim \cq_n}\left(\mathbf{1}_{i\in r}, \mathbf{1}_{j\in r}\right),
    \end{aligned}
\end{equation}
for any~$i\ne j\in \left[n\right]$.
\end{definition}
Here for a sequence $r=\left(i_1,\ldots,i_m\right)\in\seqn$, we write $i\in r$ to denote the event that $i_k=i$ for some $k$.
Assumption~\ref{assumption:symmetry} ensures that the value of $p$ (and of $q$) are shared across all $i$ (respectively, across all $i\neq j$).

We make the following restrictions on these parameters:
\begin{table*}[t]
    \vskip 0.15in
\begin{center}
\begin{small}
\begin{sc}
\begin{tabular}{lccc}
\toprule
  {\bf Algorithm} & {\bf Resampling method} $\cq_n$ 
  & $\p= \P\left\{i\in r\right\}$ 
  & $\q= -\textnormal{Cov}\left[i\in r, j\in r\right]$ 
  \\
\toprule
 Subbagging & 
 \begin{tabular}{@{}c@{}}
 $r=\left(i_1,\ldots,i_m\right)$ drawn \\
uniformly w/o replacement
 \end{tabular}
& $\frac{m}{n}$ & $\frac{m\left(n-m\right)}{n^2\left(n-1\right)}$ \\ 
\midrule
\begin{tabular}{@{}l@{}}Bernoulli \\ subbagging \end{tabular} &
\begin{tabular}{@{}c@{}}$M\sim \textnormal{Binomial}\left(n, \frac{m}{n}\right)$ \\ $r=\left(i_1,\ldots,i_M\right)$ drawn\\ 
uniformly w/o replacement
\end{tabular}
 & $\frac{m}{n}$ & 0 \\ 
\midrule
 \begin{tabular}{@{}l@{}}Classical\\ bagging\end{tabular} & $r\sim\textnormal{Unif}\left\{\left[n\right]^m\right\}$ 
 & $1-\left(1-\frac{1}{n}\right)^m$ 
 & $\left(1-\frac{1}{n}\right)^{2m} - \left(1-\frac{2}{n}\right)^{m}$ 
  \\ 
\midrule
 \begin{tabular}{@{}l@{}}Poissonized\\ bagging\end{tabular} &  \begin{tabular}{@{}c@{}}$M\sim \textnormal{Poisson}\left(m\right)$ \\ $r\mid M\sim\textnormal{Unif}\left\{\left[n\right]^M\right\}$ \end{tabular}
 & $1-e^{-m/n}$ & 0 \\ 
\bottomrule
\end{tabular}
\end{sc}
\end{small}
\end{center}
\caption{Parameters~$\p$ and~$\q$ from Definition~\ref{def:p-q} for various sampling schemes~$\cq_n$.}\label{tab-parameters}
\vskip 0.1in
\end{table*}

\begin{assumption}\label{assumption:nondegeneracy} $\cq_n$ satisfies~$\p \in \left(0,1\right)$ and~$\q\ge 0$.
\end{assumption}

The constraint~$\p\in \left(0,1\right)$ is a nondegeneracy assumption that guarantees a nonzero probability that any given observation~$Z_i$ gets excluded from some bags and included in others. The constraint~$\q\ge 0$ forces non-positive correlation between observations, that is,~$i\in r$ does not increase the probability of~$j\in r$ for~$i\ne j$.

In our work, the role of the parameter $\q$ on our stability guarantees is always relatively insignificant. The symmetry condition imposed by Assumption \ref{assumption:symmetry} implies that the indicator variables $\left(\mathbf{1}_{i\in r}\right)_{i\in \left[n\right]}$ are exchangeable. Since the covariance matrix of the random vector~$\left(\mathbf{1}_{i\in r}\right)_{i\in \left[n\right]}$ must be positive semidefinite, we always have the upper bound $q \le \frac{p\left(1-\p\right)}{n-1}$.

Table~\ref{tab-parameters} provides values of~$\p$ and~$\q$ for the four different sampling schemes discussed above, which all satisfy Assumption~\ref{assumption:nondegeneracy}.
Classical bagging and subbagging both have a small positive $\q$ due to weak negative correlation between the events $i\in r$ and $j\in r$, while Poissonized bagging and Bernoulli subbagging decorrelate these events and so~$\q = 0$.

Since algorithmic stability compares a model fit on~$n$ observations to a model fit on~$n-1$ observations, we need to specify resampling distributions at both sample sizes, that is, $\cq_n$ and $\cq_{n-1}$; naturally, to guarantee stability, these two distributions must be similar to each other. Specifically, we consider the setting where
\begin{equation}\label{eqn-define-Qn-Qn1}
\textnormal{$\cq_{n-1}$ is given by the distribution of }
\textnormal{$r\sim \cq_n$ conditional on the event $r\in\seqnn$},
\end{equation}
that is, we are conditioning on
the event that the $n$th data point is not contained in the bag. 
For example, if $\cq_n$ is chosen to be subbagging $m$ out of $n$ points (for some fixed $m\leq n-1$), then $\cq_{n-1}$ is equal to the distribution obtained by subbagging $m$ out of $n-1$ points.

\section{Stability Guarantees for Bagging}\label{sec-guarantees}

In this section, we first present our main stability guarantee when the prediction range~$\pred$ is a bounded interval. We then show that this guarantee cannot be improved in general, up to a small multiplicative factor.  

\subsection{Main Result: Guarantee for Average-case Stability}\label{sec-main-result}

We turn to our bound quantifying the average-case stability of derandomized bagging.
In this section, we restrict our attention to settings where the output regression function~$\hat{f}$ is bounded. We consider the unbounded case in Section~\ref{sec-unbounded}.

To examine the stability of  $\widetilde\ca_\infty$ (obtained by applying derandomized bagging to a base algorithm~$\ca$)
our stability results compare the models:
\begin{itemize}
\item $\hat{f}_\infty$, obtained by running derandomized bagging (Algorithm~\ref{alg:bagging-derand}) with base algorithm $\ca$, data set $\cd$, and sampling distribution $\cq_n$; and
\item $\hat{f}_\infty^{\setminus i}$, obtained by running derandomized bagging (Algorithm~\ref{alg:bagging-derand}) with base algorithm $\ca$, data set $\cd^{\setminus i}$, and sampling distribution $\cq_{n-1}$, constructed as in~\eqref{eqn-define-Qn-Qn1}.
\end{itemize}

\begin{theorem}\label{thm:upper} Let~$\pred=\left[0,1\right]$.\footnote{All theoretical results in this section are stated for $\pred=\left[0,1\right]$ for simplicity, but it is straightforward to generalize to the case $\pred=\left[a,b\right]$ by simply replacing $\left(\varepsilon, \delta\right)$-stability with $\left(\varepsilon\left(b-a\right),\delta\right)$-stability in the guarantee.} Fix a distribution~$\mathcal{Q}_n$ on~$\seqn$ satisfying Assumptions~\ref{assumption:symmetry} and~\ref{assumption:nondegeneracy}, and let $\cq_{n-1}$ be defined as in~\eqref{eqn-define-Qn-Qn1}. For any algorithm~$\ca$, derandomized bagging~$\widetilde{\ca}_\infty$ is  $\left(\varepsilon, \delta\right)$-stable provided
\begin{align}\label{eq:eps-delta}
\delta\varepsilon^2
\ge \frac{1}{4n}\left(\frac{\p}{1-\p} + \frac{\q}{\left(1-\p\right)^2}\right).
\end{align}
In particular, since $q\le \frac{p\left(1-\p\right)}{n-1}$, the above bound implies that $\left(\varepsilon, \delta\right)$-stability holds as long as 
\begin{align}
    \delta\varepsilon^2 \ge \frac{1}{4\left(n-1\right)}\cdot\frac{p}{1-p}.
\end{align}
\end{theorem}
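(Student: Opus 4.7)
The plan is to reduce stability to bounding $\sum_i D_i^2$, where $D_i(x) := \hat{f}_\infty(x) - \hat{f}_\infty^{\setminus i}(x)$, and then to control this sum by applying a clever Cauchy--Schwarz against the indicator variables $\mathbf{1}_{i\in r}$.

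\textbf{Step 1: Identify $D_i$ as a covariance.} Fix $x$, and write $g(r,\xi) = \ca(\cd_r;\xi)(x)$ so that $\hat{f}_\infty(x) = \E_{r,\xi}[g(r,\xi)]$. By Assumption~\ref{assumption:symmetry} and the definition of $\cq_{n-1}$ in~\eqref{eqn-define-Qn-Qn1}, the conditional distribution of $r\sim\cq_n$ given $i\notin r$ matches $\cq_{n-1}$, and on the event $i\notin r$ the bag $\cd_r$ is identical whether drawn from $\cd$ or $\cd^{\setminus i}$. Hence $\hat{f}_\infty^{\setminus i}(x) = \E[g \mid i\notin r] =: \mu_i$, and setting $\nu_i = \E[g \mid i\in r]$, the decomposition $\hat{f}_\infty(x) = (1-\p)\mu_i + \p\nu_i$ yields
\[
D_i(x) = \p(\nu_i - \mu_i) = \frac{\textnormal{Cov}(g, \mathbf{1}_{i\in r})}{1-\p}.
\]

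\textbf{Step 2: Markov and reduction to $\sum_i D_i^2$.} By Markov's inequality applied term-by-term,
\[
\frac{1}{n}\sum_{i=1}^n \mathbf{1}\{|D_i(x)| > \varepsilon\} \le \frac{1}{n\varepsilon^2}\sum_{i=1}^n D_i(x)^2 = \frac{1}{n\varepsilon^2(1-\p)^2}\sum_{i=1}^n C_i^2,
\]
where $C_i := \textnormal{Cov}(g,\mathbf{1}_{i\in r})$.

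\textbf{Step 3: Cauchy--Schwarz with the right test function.} The key trick is to consider the single random variable $h := \sum_{i=1}^n C_i \mathbf{1}_{i\in r}$. On the one hand,
\[
\textnormal{Cov}(g,h) = \sum_{i=1}^n C_i^2 =: S_2.
\]
On the other hand, using $\textnormal{Var}(\mathbf{1}_{i\in r}) = \p(1-\p)$ and $\textnormal{Cov}(\mathbf{1}_{i\in r},\mathbf{1}_{j\in r}) = -\q$ for $i\ne j$,
\[
\textnormal{Var}(h) = \p(1-\p)S_2 - \q\bigl[(\textstyle\sum_i C_i)^2 - S_2\bigr] = (\p(1-\p)+\q)\,S_2 - \q\,S_1^2,
\]
with $S_1 := \sum_i C_i$. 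Because $g\in[0,1]$, $\textnormal{Var}(g)\le 1/4$, so Cauchy--Schwarz gives $S_2^2 = \textnormal{Cov}(g,h)^2 \le \tfrac14 \textnormal{Var}(h)$. Since $\q\ge 0$ (Assumption~\ref{assumption:nondegeneracy}), we may drop the $-\q S_1^2$ term to conclude
\[
S_2 \le \frac{\p(1-\p)+\q}{4}.
\]

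\textbf{Step 4: Combine.} Plugging this into Step~2 gives
\[
\frac{1}{n}\sum_{i=1}^n\mathbf{1}\{|D_i(x)|>\varepsilon\} \le \frac{1}{4n\varepsilon^2}\left(\frac{\p}{1-\p} + \frac{\q}{(1-\p)^2}\right),
\]
which is $\le\delta$ under the hypothesis~\eqref{eq:eps-delta}. The ``in particular'' bound then follows by substituting the PSD upper bound $\q \le \p(1-\p)/(n-1)$ into the right-hand side.

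The main obstacle is Step~3: one needs to see that $\sum_i C_i^2$ is itself a covariance $\textnormal{Cov}(g,h)$ for a carefully chosen linear combination $h$ of the inclusion indicators, so that a single application of Cauchy--Schwarz (using the boundedness of $g$) dominates the whole sum. The symmetry Assumption~\ref{assumption:symmetry} is what makes $\textnormal{Var}(h)$ clean enough to evaluate, and the nondegeneracy Assumption~\ref{assumption:nondegeneracy} (specifically $\q\ge 0$) is what lets us discard the $-\q S_1^2$ term without losing anything.
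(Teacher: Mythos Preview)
Your proof is correct and is essentially the same as the paper's: both identify $\hat f_\infty(x) - \hat f_\infty^{\setminus i}(x)$ as (a multiple of) a covariance with $\mathbf{1}_{i\in r}$, form a linear combination of the inclusion indicators, apply Cauchy--Schwarz together with Popoviciu's bound $\textnormal{Var}(g)\le 1/4$, compute the variance of that linear combination, and drop the $-q$ term using Assumption~\ref{assumption:nondegeneracy}. The only cosmetic difference is packaging: the paper's main proof uses sign-valued weights $s_i\in\{-1,0,1\}$ supported on the exceedance set $\mathcal{K}$ to bound $|\mathcal{K}|$ directly, whereas you use the real-valued weights $C_i$ to bound $\sum_i D_i^2$ and then apply Markov---which is precisely the $k=2$ route the paper itself takes in Theorem~\ref{thm:general-lp} to recover Theorem~\ref{thm:upper}.
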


We prove \cref{thm:upper}, along with all subsequent results, in Appendix~\ref{appendix-proofs}.\footnote{The proof shows a slightly stronger notion of stability, where $\P_\xi\left\{\left|\hat{f}\left(x\right) - \hat{f}^{\setminus i}\left(x\right)\right| > \varepsilon\right\}$ in \cref{eq:avg-case-stability} is replaced with $\P_\xi\left\{\left|\hat{f}\left(x\right) - \hat{f}^{\setminus i}\left(x\right)\right|\ge \varepsilon\right\}$.}

A simple application of Hoeffding's inequality leads to a similar stability guarantee for generic bagging. In this result, we compare the models:
\begin{itemize}
\item $\hat{f}_B$, obtained by running generic bagging (Algorithm~\ref{alg:bagging}) with base algorithm $\ca$, data set $\cd$, and resampling distribution $\cq_n$; and
\item $\hat{f}_B^{\setminus i}$, obtained by running generic bagging (Algorithm~\ref{alg:bagging}) with base algorithm $\ca$, data set $\cd^{\setminus i}$,  and resampling distribution $\cq_{n-1}$, constructed as in~\eqref{eqn-define-Qn-Qn1}.
\end{itemize}

\begin{theorem}\label{thm:upper-random} Let~$\pred=\left[0,1\right]$. Fix a distribution~$\mathcal{Q}_n$ on~$\seqn$ satisfying Assumptions~\ref{assumption:symmetry} and~\ref{assumption:nondegeneracy}, and let $\cq_{n-1}$ be defined as in~\eqref{eqn-define-Qn-Qn1}. For any algorithm~$\ca$ and any $B\geq 1$, generic bagging~$\widetilde{\ca}_B$ is $\left(\varepsilon + \sqrt{\frac{2}{B}\log\left(\frac{4}{\delta'}\right)}, \delta+\delta'\right)$-stable for any~$\left(\varepsilon, \delta\right)$ satisfying Condition~\eqref{eq:eps-delta} and any $\delta' > 0$.
\end{theorem}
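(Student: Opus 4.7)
The plan is to reduce \cref{thm:upper-random} to \cref{thm:upper} via a concentration-plus-triangle-inequality argument. Fix any data set $\cd$ and test point $x$. Since the $B$ fits $\hat{f}^{(b)}(x) \in [0,1]$ are i.i.d.\ across $b=1,\ldots,B$ with common expectation $\hat{f}_\infty(x)$, and $\hat{f}_B(x)$ is their empirical mean, Hoeffding's inequality gives
\[
\P\bigl\{|\hat{f}_B(x) - \hat{f}_\infty(x)| > t\bigr\} \le 2 e^{-2Bt^2},
\]
and an identical bound applies to $|\hat{f}_B^{\setminus i}(x) - \hat{f}_\infty^{\setminus i}(x)|$, since the leave-one-out bagged fit is itself an average of i.i.d.\ $[0,1]$-valued draws with mean $\hat{f}_\infty^{\setminus i}(x)$, by construction of $\cq_{n-1}$ and Algorithm~\ref{alg:bagging}. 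I would choose $t = \sqrt{\frac{1}{2B}\log(4/\delta')}$, so each tail probability becomes at most $\delta'/2$.

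The next step is a union bound over the two Hoeffding events, followed by the triangle inequality: with probability at least $1-\delta'$,
\[
|\hat{f}_B(x) - \hat{f}_B^{\setminus i}(x)| \le |\hat{f}_\infty(x) - \hat{f}_\infty^{\setminus i}(x)| + 2t,
\]
and note that $2t = \sqrt{(2/B)\log(4/\delta')}$ matches exactly the slack appearing in the $\varepsilon$ parameter of the statement. Because $\hat{f}_\infty$ and $\hat{f}_\infty^{\setminus i}$ are deterministic given $\cd$ (all bag and seed randomness having been averaged out in Algorithm~\ref{alg:bagging-derand}), the event $\{|\hat{f}_\infty(x) - \hat{f}_\infty^{\setminus i}(x)| > \varepsilon\}$ reduces to an indicator $\mathbf{1}_i$ that depends only on $\cd$, $x$, and $i$. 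Combining, for each $i$,
\[
\P\bigl\{|\hat{f}_B(x) - \hat{f}_B^{\setminus i}(x)| > \varepsilon + 2t\bigr\} \le \mathbf{1}_i + \delta'.
\]

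Averaging over $i \in [n]$ finishes the argument: the average $\frac{1}{n}\sum_i \mathbf{1}_i$ is exactly the quantity controlled by \cref{thm:upper} whenever $(\varepsilon,\delta)$ satisfies \eqref{eq:eps-delta}, hence it is at most $\delta$, and the total average probability is at most $\delta + \delta'$---precisely the $(\varepsilon + \sqrt{(2/B)\log(4/\delta')},\,\delta + \delta')$-stability claim. I do not expect any genuine obstacle here; the only points requiring care are (i) invoking \cref{thm:upper} pointwise in $i$ rather than on average, which is legitimate because the derandomized leave-one-out gap is deterministic, and (ii) bookkeeping the constants so that the Hoeffding slack matches the $\sqrt{(2/B)\log(4/\delta')}$ advertised in the statement.
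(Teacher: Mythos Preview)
Your proposal is correct and matches the paper's own proof essentially line for line: both arguments split the finite-$B$ leave-one-out gap via the triangle inequality into the deterministic derandomized gap (controlled by \cref{thm:upper}) plus two Monte Carlo fluctuation terms, each bounded by Hoeffding's inequality at level $\delta'/2$ with the same choice $t=\sqrt{\frac{1}{2B}\log(4/\delta')}$. The bookkeeping you flagged is exactly right, and there is no genuine obstacle.
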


If derandomized bagging is guaranteed to satisfy $\left(\varepsilon, \delta\right)$ stability via~\eqref{eq:eps-delta}, then we may take $B \geq \frac{2}{\varepsilon^2}\log\left(\frac{4}{\delta}\right)$ to guarantee $\left(2\varepsilon, 2\delta\right)$-stability of generic bagging. For instance, if $p\in \left(0,1\right)$ and $\delta\in \left(0,1\right)$ are regarded as constants,~\cref{thm:upper} guarantees stability of derandomized bagging as long as~$\varepsilon \gtrsim \frac{1}{\sqrt{n}}$. In order to guarantee the same level of stability for generic bagging, we need the number of bags~$B$ to be of the same order as the number of observations~$n$, which is typically unrealistic in practice. More generally, for any fixed~$B$, the result accounts for the Monte Carlo error in the generic bagging algorithm. 

\subsubsection{Sampling Regimes for Subbagging}

\cref{thm:upper} covers a wide range of regimes depending on the choices of~$\varepsilon$, $\delta$ and~$p$. In this section, we give some concrete examples in the case of subbagging, to build intuition:\\

\noindent\emph{Proportional subsampling with $m \propto n$:}
    Suppose we employ subbagging with $m=n/2$. The stability condition~\eqref{eq:eps-delta} in the theorem simplifies to~$\delta\varepsilon^2 \ge \frac{1}{4\left(n-1\right)}$. More generally, for $m=O\left(n\right)$, stability holds with $\delta\varepsilon^2 \gtrsim \frac{1}{n}$. Hence, our stability result applies in a variety of regimes. For instance, if~$\delta > 0$ does not depend on~$n$, bagging satisfies average-case $\left(\varepsilon, \delta\right)$-stability with~$\varepsilon = O\left(n^{-1/2}\right)$. We may also take $\varepsilon > 0$ fixed and~$\delta = O\left(n^{-1}\right)$, or even~$\delta = \varepsilon = O\left(n^{-1/3}\right)$ going to zero simultaneously. \\
    
\noindent\emph{Massive subsampling with $m = o\left(n\right)$:}
    For massive data sets, it may be computationally advantageous to subsample a very small fraction of the data \citep{kleiner2014scalable}. \emph{Massive subsampling}, where we take bags of size $m = O\left(n^{\kappa}\right)$ for some $\kappa \in \left(0,1\right)$, can be seen to further enhance stability via our result above. In this case, condition~\eqref{eq:eps-delta} becomes~$\delta\varepsilon^2 \gtrsim \frac{1}{n^{2-\kappa}}$. See \cref{sec-related} for a discussion of results in the literature in this regime. \\
    
\noindent\emph{Minimal subsampling with $m = n-o\left(n\right)$:}
     Massive subsampling, or even subsampling a constant fraction of the data, often comes with some loss of statistical efficiency. To avoid this, our result even allows for resampling schemes with~$m = n-o\left(n\right)$, that is, each subsample contains nearly the entire data set. For example, taking $m=n-n^{\kappa}$ for some $\kappa\in\left(0,1\right)$, condition~\eqref{eq:eps-delta} becomes~$\delta\varepsilon^2 \gtrsim \frac{1}{n^{\kappa}}$. 

\subsection{Tightness of Stability Guarantee}

In the special case of subbagging, we show that \cref{thm:upper} cannot be improved (beyond a constant factor) without assuming more about the base algorithm. We only state this result in the ideal, derandomized case, since this is typically more stable than its finite~$B$ counterpart.

\begin{theorem}\label{thm:lower} Let $\pred = \left[0,1\right]$. Fix $n > m \ge 1$ and $\delta\in \left(0,1/2\right)$. There is a base algorithm~$\ca^\sharp$ such that subbagging~$\widetilde\ca^\sharp_\infty$ with $m$ out of~$n$ observations is not $\left(\varepsilon, \delta\right)$-stable for any
\begin{equation}\label{eqn-thm-lower}
\varepsilon 
<\left(1-\delta-n^{-1}\right)p\,\P\left\{H = \Big\lfloor p\left(1+\lfloor n\delta \rfloor\right)\Big\rfloor\right\},
\end{equation}
where $\p=\frac{m}{n}$ and where the probability is taken with respect to~$H\sim \textnormal{HyperGeometric}\left(n-1, \lfloor n\delta\rfloor, m\right)$.
\end{theorem}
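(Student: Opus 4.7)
The plan is to exhibit an explicit worst-case base algorithm and a data set that witnesses non-stability. Let $K := \lfloor n\delta\rfloor$ and $k^* := \lfloor p(K+1)\rfloor$. I will take the base algorithm
\[
\ca^\sharp(\cd'; \xi)(x) = \mathbf{1}\bigl\{|\{Z \in \cd' : Z = 1\}| > k^*\bigr\},
\]
which depends on its input only through the count of ones, and use the witnessing data set $\cd$ with $Z_i = \mathbf{1}\{i \in S\}$ for a fixed subset $S \subseteq [n]$ of size $K+1$. The size $|S| = K+1$ is just large enough that $|S|/n > \delta$, while $k^*$ sits essentially at the mode of the hypergeometric distribution $\mathrm{HG}(n-1, K, m)$ that will govern $\hat{f}_\infty^{\setminus i}$ after a special index $i$ has been removed. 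The goal is to show that every $i \in S$ is a ``bad'' index in the sense of Definition~\ref{def:average-case-stability}, so that a fraction $(K+1)/n > \delta$ of indices violates the stability condition.

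Fix $i \in S$ and decompose $\hat{f}_\infty(x)$ by conditioning on whether $i \in r$ for $r \sim \cq_n$. Given $i \in r$ (probability $p$), the remaining $m-1$ indices of $r$ are uniform in $[n]\setminus\{i\}$, which contains $K$ elements of $S$, so the count of ones in the bag equals $1 + H_1$ with $H_1 \sim \mathrm{HG}(n-1, K, m-1)$; given $i \notin r$, the count equals $H$ with $H \sim \mathrm{HG}(n-1, K, m)$. The same $H$ governs $\hat{f}_\infty^{\setminus i}(x) = \P\{H > k^*\}$ by the compatibility~\eqref{eqn-define-Qn-Qn1} between $\cq_n$ and $\cq_{n-1}$, yielding
\[
\hat{f}_\infty(x) - \hat{f}_\infty^{\setminus i}(x) = p\bigl(\P\{H_1 \geq k^*\} - \P\{H \geq k^* + 1\}\bigr).
\]
The natural coupling $H = H_1 + X$ with $X \mid H_1 = j$ Bernoulli with parameter $(K-j)/(n-m)$ collapses the bracket to $\P\{H_1 = k^*\}\cdot(n-m-K+k^*)/(n-m)$, and substituting the pmf ratio identity $\P\{H_1 = k\}/\P\{H = k\} = (m-k)(n-m)/[m(n-K-m+k)]$ causes the $(n-m-K+k^*)$ factor to cancel, leaving the clean formula
\[
\hat{f}_\infty(x) - \hat{f}_\infty^{\setminus i}(x) = p \cdot \P\{H = k^*\} \cdot (1 - k^*/m).
\]

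The proof finishes by bounding $1 - k^*/m \geq 1 - (K+1)/n \geq 1 - \delta - n^{-1}$, using $k^* \leq p(K+1)$ and $K \leq n\delta$. Since $(K+1)/n > \delta$ strictly (because $\lfloor n\delta\rfloor + 1 > n\delta$ always), on the witnessing data set $\cd$ a fraction $(K+1)/n > \delta$ of indices has leave-one-out perturbation at least $(1-\delta-n^{-1}) p \, \P\{H = k^*\}$, so $\widetilde{\ca}^\sharp_\infty$ fails average-case stability for every smaller $\varepsilon$. The main obstacle is guessing the correct construction: first attempts such as $|S| = K$ or the equality indicator $\mathbf{1}\{|r \cap S| = k^*\}$ either produce too few bad indices or yield perturbations that are smaller by a polynomial-in-$n$ factor; it is only the precise pairing of $|S| = K+1$ with the one-sided threshold rule that allows the coupling and pmf-ratio calculations above to conspire into the scaling $p(1 - k^*/m)\P\{H = k^*\}$ matching the claimed bound.
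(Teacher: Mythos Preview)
Your construction is identical to the paper's (with $K_{\text{paper}} = K_{\text{yours}}+1$ and the same threshold $k^* = \lfloor mK_{\text{paper}}/n\rfloor$), and your final closed form $\hat f_\infty(x)-\hat f_\infty^{\setminus i}(x) = p(1-k^*/m)\,\P\{H=k^*\}$ coincides with the paper's. The route to that formula differs: the paper introduces a second index $j\notin S$ and uses the symmetry identities $\hat y = \eta\hat y^i + (1-\eta)\hat y^j = p\hat y^i + (1-p)\hat y^{\setminus i}$ to reduce to $\hat y^{i\setminus j}-\hat y^{j\setminus i}$, which becomes a single hypergeometric point mass after algebra on binomial coefficients; you instead condition directly on $\{i\in r\}$, couple $H\sim\mathrm{HG}(n-1,K,m)$ as $H_1+X$ with $H_1\sim\mathrm{HG}(n-1,K,m-1)$, and use the pmf ratio $\P\{H_1=k\}/\P\{H=k\}=(m-k)(n-m)/[m(n-K-m+k)]$ to cancel the awkward factor. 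Your coupling argument is a bit more self-contained (no auxiliary index, no binomial-coefficient manipulation), while the paper's symmetry trick foreshadows the exchangeability ideas used elsewhere in the paper; both yield the same bound with the same looseness only in the final step $1-k^*/m\ge 1-\delta-n^{-1}$.
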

To see how this result compares to the guarantee given in \cref{thm:upper}, consider a simple case where $n\delta$ and $m\delta = np\delta$ are integers, and take $p<1$. Then
\begin{equation*}\P\left\{H = \Big\lfloor p\left(1+\lfloor n\delta \rfloor\right)\Big\rfloor\right\} = \P\left\{H = n p \delta\right\}
= \frac{{n\delta\choose np\delta}\cdot{n\left(1-\delta\right)-1\choose np\left(1-\delta\right)}}{{n-1\choose np}}
\approx \frac{1}{\sqrt{2\pi n \delta\left(1-\delta\right)p\left(1-\p\right)}},
\end{equation*}
where the last step holds by taking Stirling's approximation to each factorial term in each Binomial coefficient (and the approximation is accurate as long as $n\cdot \min\left\{\delta,1-\delta\right\}\cdot\min\left\{p,1-p\right\}$ is large). Thus the right-hand side of~\eqref{eqn-thm-lower} is approximately
\[ \approx \frac{1}{\sqrt{2\pi n}}\cdot \sqrt{\frac{1-\delta}{\delta}\cdot \frac{p}{1-p}}.\]
Since we have assumed $\delta<1/2$, we therefore see that stability fails for $\ca^\sharp$ when (approximately)
\[\delta\varepsilon^2< \frac{1}{4\pi n}\cdot\frac{p}{1-p}.\]
Up to a constant, this matches the leading term of the stability guarantee in \cref{thm:upper}, demonstrating the tightness of our guarantee. 

In \cref{fig-phase}, we plot a phase diagram comparing the stability guarantee~\eqref{eq:eps-delta} with the tightness condition~\eqref{eqn-thm-lower} for finite~$n$. We take $n=500, p = 1/2$, and $q=p\left(1-\p\right)/\left(n-1\right) = 1/1996$, which are the values of $p$ and $q$ for subbagging with $m=n/2$ (see \cref{tab-parameters}). The blue line shows, for each $\delta$, the minimum $\varepsilon$ satisfying \eqref{eq:eps-delta}, and the shaded blue region shows additional $\left(\varepsilon, \delta\right)$ pairs satisfying the inequality. This means that, for any base algorithm $\ca$ with outputs in $\pred =\left[0,1\right]$, its subbagged version is guaranteed to satisfy $\left(\varepsilon, \delta\right)$-stability for any pair $\left(\varepsilon, \delta\right)$ in the blue shaded region.
Similarly, the red line shows, for each $\delta$, the maximum $\varepsilon$ satisfying
\eqref{eqn-thm-lower}. This means that, for any $\left(\varepsilon, \delta\right)$ in the red shaded region, we can construct an algorithm $\ca^\sharp$, again with outputs in $\pred=\left[0,1\right]$, such that its subbagged version fails to be $\left(\varepsilon, \delta\right)$-stable. The narrow white region between the two conditions illustrates the small gap between the two results.

\begin{figure}[tb]
\begin{center}
\includegraphics[width=.6\linewidth]{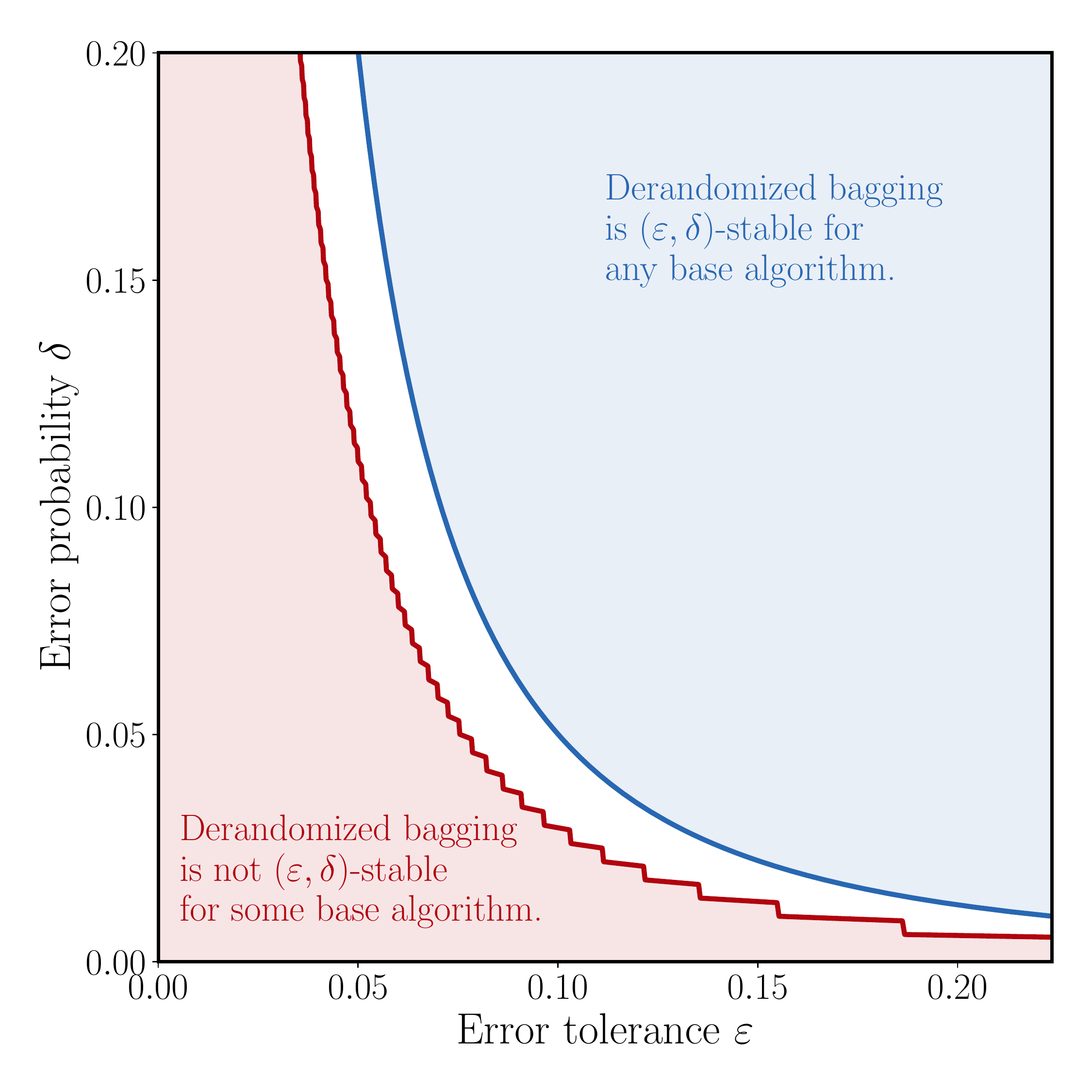}
\caption{Phase diagram comparing \cref{thm:upper,thm:lower}, with $n=500, p=0.5$.}
\label{fig-phase}
\end{center}
\end{figure}

\section{Extensions}\label{sec-extensions}

In this section, we consider various extensions of our main result. We first discuss two approaches to the case of unbounded outputs. Next, we show a hardness result explaining why we cannot obtain a similar guarantee for worst-case stability. Finally, we consider the implications of our main result for various alternative definitions of stability.

\subsection{Unbounded Outputs}\label{sec-unbounded}

We next extend our main result to algorithms~$\ca$ with unbounded output~$\pred = \R$. For derandomized bagging~$\widetilde\ca_\infty$ to be well-defined, we assume that the expectation
\[\E_{r,\xi}\left[\ca\left(\cd_r; \xi\right)\left(x\right)\right]\]
exists. For instance, for classical bagging, subbagging, and Bernoulli subbagging, the average over $r\sim \cq_n$ constitutes a finite sum, so we are simply assuming that expectation over the random seed
\[\E_{\xi}\left[\ca\left(\cd; \xi\right)\left(x\right)\right]\]
exists for any fixed data set $\cd$. 

In order to establish some control over the scale of the outputs of the fitted model, we extend our definition of average-case stability to allow for a data-dependent component. Consider for instance any algorithm $\ca$ with $\pred = \left[0,1\right]$, and define a new algorithm $\ca'$ scaling the outputs by $R > 0$, that is, $\ca'\left(\cd; \xi\right) = R\cdot\ca\left(\cd; \xi\right)$. If the original algorithm $\ca$ is $\left(\varepsilon, \delta\right)$-stable, then the scaled algorithm~$\ca'$ is $\left(\varepsilon R, \delta\right)$-stable. 

We might hope that we can take $R$ to be the empirical range of the algorithm,
\[R = \textnormal{Range}\left(\cd, x\right) = \sup_{r : \cq_n\left(\left\{r\right\}\right) > 0}\E_\xi\left[\ca\left(\cd_{r}; \xi\right)\left(x\right)\right]
- \inf_{r : \cq_n\left(\left\{r\right\}\right) > 0}\E_\xi\left[\ca\left(\cd_{r}; \xi\right)\left(x\right)\right].\]
However, since this quantity depends (in general) on $\cd$ and on $x$, it would not be well-defined to claim that $\ca$ is $\left(\varepsilon R, \delta\right)$-stable universally across all $\cd$ and all $x$.

Instead, to allow for a data-dependent range, we consider scaling~$\varepsilon$ by a data-dependent scale parameter~$\cR\left(\cd, x\right)$, where
\[
\cR : \bigcup_{n\ge 0} \left(\cx\times\cy\right)^n \times \cx \to \R_+.
\]
We now define $\left(\varepsilon, \delta, \cR\right)$-stability to account for data-dependent changes in scale.

\begin{definition} Let $\varepsilon, \delta \ge 0$ and let ${\cR}$ denote a data-dependent range (formally defined above). An algorithm~$\ca$ is $\left(\varepsilon, \delta, \cR\right)$-stable if, for all data sets~$\cd = \left(Z_i\right)_{i=1}^n$ of size $n$ and all test points~$x\in \cx$,
    \begin{align}
        \frac{1}{n}\sum_{i=1}^n\P_\xi\left\{\left|\hat{f}\left(x\right) - \hat{f}^{\setminus i}\left(x\right)\right| > \varepsilon\,{\cR}\left(\cd, x\right)\right\}\le \delta,
    \end{align}
    where $\hat{f} = \ca\left(\cd; \xi\right)$, $\hat{f}^{\setminus i} = \ca\left(\cd^{\setminus i}; \xi\right)$ and $\cd^{\setminus i} = \left(Z_j\right)_{j\ne i}$. 
\end{definition}

Inspecting the proof of \cref{thm:upper}, we only use boundedness to control the variance of our model predictions $\E_\xi\left[\ca\left(\cd_{r}; \xi\right)\left(x\right)\right]$ as a function of the random bag $r\sim \cq_n$. This observation leads to the following, more general result.

\begin{theorem}\label{thm:unbounded-general} Let~$\pred=\R$. Fix a distribution~$\mathcal{Q}_n$ on~$\seqn$ satisfying Assumptions~\ref{assumption:symmetry} and~\ref{assumption:nondegeneracy}, and let $\cq_{n-1}$ be defined as in~\eqref{eqn-define-Qn-Qn1}. Let $\left(\varepsilon, \delta\right)$ satisfy \cref{eq:eps-delta}. For any algorithm~$\ca$, derandomized bagging~$\widetilde{\ca}_\infty$ is $\left(\varepsilon, \delta, {\cR}^*\right)$-stable, where 
\begin{align}\label{eq:radius} 
{\cR}^*\left(\cd, x\right) := 2\sqrt{\textnormal{Var}_{r\sim \cq_n}\Big(\E_\xi\left[\ca\left(\cd_{r}; \xi\right)\left(x\right)\right]\Big)} \leq \textnormal{Range}\left(\cd, x\right).
\end{align}
\end{theorem}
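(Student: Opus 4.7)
The plan is to revisit the proof of \cref{thm:upper} and isolate where boundedness of $\hat f$ is actually used. As the remark preceding the theorem statement indicates, boundedness enters the bounded-case argument only through the inequality $\textnormal{Var}(g(r)) \le 1/4$ for $g(r) := \E_\xi[\ca(\cd_r;\xi)(x)] \in [0,1]$ (Popoviciu's inequality). Replacing this crude variance bound with the actual variance and rescaling the threshold by $\cR^*(\cd,x) = 2\sqrt{\textnormal{Var}_{r\sim\cq_n}(g(r))}$ should yield the claim with no loss.

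Concretely, the first step is a covariance identity for the leave-one-out difference. Since by construction $\cq_{n-1}$ equals the law of $r\sim\cq_n$ conditioned on $i\notin r$, one may write $\hat f_\infty^{\setminus i}(x) = \E_{r\sim\cq_n}[g(r)\mid i\notin r]$ (after relabeling), and a short computation gives
\[
\hat f_\infty(x) - \hat f_\infty^{\setminus i}(x) \;=\; \frac{\textnormal{Cov}_{r\sim\cq_n}\!\left(\mathbf{1}_{i\in r},\, g(r)\right)}{1-\p}.
\]
The second step is the projection-style bound
\[
\sum_{i=1}^n \textnormal{Cov}\!\left(\mathbf{1}_{i\in r},\, g(r)\right)^{\!2} \;\le\; \bigl(\p(1-\p) + \q\bigr)\,\textnormal{Var}(g(r)),
\]
which I would prove by duality: the left side equals $\|\E[(V - \p\mathbf{1})\,g(r)]\|_2^2$ with $V = (\mathbf{1}_{1\in r},\dots,\mathbf{1}_{n\in r})$, and Cauchy--Schwarz bounds this by $\|\textnormal{Cov}(V)\|_{\textnormal{op}}\cdot\textnormal{Var}(g)$. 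Under Assumption~\ref{assumption:symmetry} and Definition~\ref{def:p-q}, the matrix $\textnormal{Cov}(V) = (\p(1-\p)+\q)I - \q\mathbf{1}\mathbf{1}^\top$ has operator norm $\p(1-\p)+\q$ when $\q\ge 0$ (Assumption~\ref{assumption:nondegeneracy}).

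Combining the two displays yields $\sum_i (\hat f_\infty - \hat f_\infty^{\setminus i})^2 \le \bigl(\p/(1-\p) + \q/(1-\p)^2\bigr)\,\textnormal{Var}(g)$, and Markov's inequality applied with threshold $(\varepsilon\cR^*)^2 = 4\varepsilon^2\textnormal{Var}(g)$ then gives
\[
\frac{1}{n}\sum_{i=1}^n \mathbf{1}\!\left\{\left|\hat f_\infty(x) - \hat f_\infty^{\setminus i}(x)\right| > \varepsilon\cR^*\right\} \;\le\; \frac{1}{4n\varepsilon^2}\!\left(\frac{\p}{1-\p} + \frac{\q}{(1-\p)^2}\right) \;\le\; \delta,
\]
by hypothesis~\eqref{eq:eps-delta}. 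The inequality $\cR^* \le \textnormal{Range}(\cd,x)$ is Popoviciu's inequality applied to $g(r)$ viewed as a random variable in $[\inf_r g(r), \sup_r g(r)]$. I expect the main obstacle to be the covariance-matrix inequality: the constant $\p(1-\p) + \q$ depends sensitively on the exchangeable structure of $V$ enforced by Assumption~\ref{assumption:symmetry}; once that lemma is in hand, the remaining steps are algebra and Markov's inequality.
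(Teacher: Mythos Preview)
Your argument is correct. The covariance identity in Step~1 is exactly the content of Step~1 of the proof of \cref{thm:upper}, and the operator-norm bound in Step~2 is valid: under Assumptions~\ref{assumption:symmetry} and~\ref{assumption:nondegeneracy}, $\textnormal{Cov}(V) = (\p(1-\p)+\q)I - \q\mathbf{1}\mathbf{1}^\top$ has largest eigenvalue $\p(1-\p)+\q$, so the duality/Cauchy--Schwarz step goes through. Combining these yields the $\ell_2$ bound, and Markov's inequality finishes the proof. (The edge case $\textnormal{Var}(g)=0$, where $\cR^*=0$, is harmless since then $g(r)$ is a.s.\ constant and every leave-one-out difference vanishes.)

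The paper's own proof takes a slightly different route. It reuses the structure of \cref{thm:upper} verbatim: define the bad set $\mathcal{K} = \{i : |\hat y - \hat y^{\setminus i}| > \varepsilon\cR^*\}$, bound $K\varepsilon\cR^* \le L_1(\mathcal{K})$, write $L_1(\mathcal{K})$ via sign weights $s_i = \textnormal{sign}(\hat y - \hat y^{\setminus i})\mathbf{1}_{i\in\mathcal{K}}$, and apply Cauchy--Schwarz, replacing the Popoviciu bound $\textnormal{Var}(\hat y^{(r)})\le 1/4$ by $\textnormal{Var}(\hat y^{(r)}) = (\cR^*/2)^2$. The variance $\textnormal{Var}\bigl(\sum_i s_i\mathbf{1}_{i\notin r}\bigr)$ is then expanded directly and bounded by $K(\p(1-\p)+\q)$. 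Your approach instead bounds the full $\ell_2$ norm of the leave-one-out differences via the operator norm of $\textnormal{Cov}(V)$ and then applies Markov; this is closer in spirit to the $k=2$ case of \cref{thm:general-lp}, but even there the paper expands the variance by hand rather than invoking an eigenvalue. Your route is a bit more linear-algebraic and makes transparent why the constant $\p(1-\p)+\q$ appears (it is the top eigenvalue of an exchangeable covariance matrix); the paper's route is more elementary and avoids any matrix language. Both yield the identical inequality.
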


As long as $\E_\xi\left[\ca\left(\cd_{r}; \xi\right)\left(x\right)\right]$ is well-defined for every $r$, the range ${\textnormal{Range}\left(\cd, x\right)}$ is automatically finite for any $\cq_n$ with finite support. Furthermore, \cref{thm:unbounded-general} strictly generalizes the stability guarantee of \cref{thm:upper}, since ${\cR}^*\left(\cd, x\right)\le \textnormal{Range}\left(\cd, x\right)\le 1$ in the case $\pred =\left[0,1\right]$. We present a weaker result for the finite-$B$ regime in \cref{sec-finite-B-unbdd}.

\subsubsection{Alternative Approach: Adaptive Clipping}

In some settings, for example, with heavy tailed responses, the range in the previous display or the standard deviation in~\cref{eq:radius} may be prohibitively large. One way to reduce the standard deviation ${\cR}^*\left(\cd, x\right)$ is to post-process the algorithm~$\ca$. We next consider the advantages of clipping the output of~$\ca$ to secure greater stability.

Given an interval $I = \left[l,u\right]$ and a response~$\hat{y}\in \R$, the clipped response~$\textnormal{Clip}_I\left(\hat{y}\right)$ is defined as
\begin{align}
    \textnormal{Clip}_I\left(\hat{y}\right)
    \coloneqq \max\left\{l, \min\left\{\hat{y}, u\right\}\right\}.
\end{align}
In Algorithm~\ref{alg:bagging-clipped}, we define a variant of the derandomized bagging algorithm that allows the individual bagged predictions~$\hat{f}_\infty^{\left(r\right)}\left(x\right)$ to be clipped to some interval~$I = I\left(\cd\right)$ that depends on the full data set. We write~$\widetilde\ca_{B, I}$ to denote the algorithm obtained by applying adaptively clipped bagging with~$\ca$ as the base algorithm. Stability of \cref{alg:bagging-clipped} does not follow immediately from \cref{thm:unbounded-general} because $I\left(\cd\right)$ may not be the same as $I\left(\cd^{\setminus i}\right)$, so the algorithm being bagged is itself changing when we perturb the training data. For simplicity, we state our result for the derandomized limit~$\widetilde\ca_{\infty, I}$ and give a finite~$B$ version in~\cref{sec-finite-B-unbdd}.

\begin{algorithm}
   \caption{Adaptively Clipped Bagging}
   \label{alg:bagging-clipped}
    \begin{algorithmic}
   \INPUT Base algorithm $\ca$; data set $\cd$ with $n$ training points; number of bags $B\geq 1$; resampling distribution $\cq_n$; data-dependent range $I\left(\cdot\right)$
   \FOR{$b=1,\dots,B$}
    \STATE Sample bag $r^{\left(b\right)} = \left(i_1^{\left(b\right)},\ldots,i_{n_b}^{\left(b\right)}\right)\sim\cq_n$
   \STATE Sample seed $\xi^{\left(b\right)} \sim \textnormal{Unif}\left(\left[0,1\right]\right)$
   \STATE Fit model $\hat{f}^{\left(b\right)} = \ca\left(\cd_{r^{\left(b\right)}};\xi^{\left(b\right)}\right)$
   \ENDFOR
   \OUTPUT Averaged model $\hat{f}_{B,I}$ defined by
   \[
   \textstyle
    \hat{f}_{B,I}\left(x\right) = \frac{1}{B}\sum_{b=1}^B \textnormal{Clip}_{I\left(\cd\right)}\left(\hat{f}^{\left(b\right)}\left(x\right)\right)\] 
\end{algorithmic}
\end{algorithm}

\begin{theorem}\label{thm:unbounded}
    Let~$\pred=\R$. Fix a distribution~$\mathcal{Q}_n$ on~$\seqn$ satisfying Assumptions~\ref{assumption:symmetry} and~\ref{assumption:nondegeneracy}, and let $\cq_{n-1}$ be defined as in~\eqref{eqn-define-Qn-Qn1}. Suppose the mapping~$I\left(\cdot\right)$ from data sets to intervals satisfies
    \begin{align}\label{eq:interval-stability}
    \frac{1}{n}\sum_{i=1}^n \mathbf{1}\left\{I\left(\cd\right)\ne I\left(\cd^{\setminus i}\right)\right\} \le \delta_I.
    \end{align}
    Let ${\cR}\left(\cd, x\right) = \textnormal{length}\left(I\left(\cd\right)\right)$ and let $\left(\varepsilon, \delta\right)$ satisfy \cref{eq:eps-delta}. For any algorithm~$\ca$, derandomized adaptively clipped bagging~$\widetilde{\ca}_{\infty,I}$ is $\left(\varepsilon, \delta + \delta_I, {\cR}\right)$-stable.
\end{theorem}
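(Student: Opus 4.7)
The plan is to reduce \cref{thm:unbounded} to the bounded-output guarantee of \cref{thm:upper} by ``freezing'' the clipping interval to the one associated with the full data set. Fix a data set $\cd$ of size $n$ and a test point $x$, and write $I = I(\cd)$, an interval of length $\cR(\cd, x)$. I will introduce an auxiliary base algorithm
\[
\ca_I(\cd'; \xi) \coloneqq \textnormal{Clip}_{I}\bigl(\ca(\cd'; \xi)\bigr),
\]
which always outputs values in the fixed bounded interval $I$. The rescaling of \cref{thm:upper} to $\pred = I$ (noted in the footnote to that theorem) shows that whenever $(\varepsilon, \delta)$ satisfies \eqref{eq:eps-delta}, the derandomized bagged version of $\ca_I$, run with $\cq_n$ and $\cq_{n-1}$, is $(\varepsilon\,\cR(\cd,x), \delta)$-stable.

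Next I will relate this surrogate back to the adaptively clipped model. Define
\[
\hat{g}_\infty(x) = \E_{r\sim\cq_n,\, \xi}\bigl[\textnormal{Clip}_{I}(\ca(\cd_r;\xi)(x))\bigr], \quad
\hat{g}_\infty^{\setminus i}(x) = \E_{r\sim\cq_{n-1},\, \xi}\bigl[\textnormal{Clip}_{I}(\ca(\cd^{\setminus i}_r;\xi)(x))\bigr].
\]
The stability conclusion for $\ca_I$ then reads
\[
\frac{1}{n}\sum_{i=1}^n \mathbf{1}\bigl\{|\hat{g}_\infty(x) - \hat{g}_\infty^{\setminus i}(x)| > \varepsilon\,\cR(\cd, x)\bigr\} \le \delta.
\]
By construction $\hat{g}_\infty(x) = \hat{f}_{\infty,I}(x)$, and on the event $\{I(\cd^{\setminus i}) = I(\cd)\}$ the clipping intervals used by the surrogate and by the true leave-one-out model coincide, so that $\hat{g}_\infty^{\setminus i}(x) = \hat{f}_{\infty,I}^{\setminus i}(x)$ as well.

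Finally, I will split the average over $i \in [n]$ according to whether $I(\cd^{\setminus i}) = I(\cd)$. On the ``good'' indices, the previous display already controls the contribution by at most $\delta$; on the ``bad'' indices, I bound each indicator trivially by $1$, and assumption \eqref{eq:interval-stability} caps the fraction of such indices by $\delta_I$. Summing the two pieces yields the claimed $(\varepsilon, \delta + \delta_I, \cR)$-stability of $\widetilde{\ca}_{\infty,I}$. The main obstacle is conceptual rather than computational: because $I(\cd^{\setminus i})$ may differ from $I(\cd)$, one cannot directly apply the bounded-output theorem to $\widetilde{\ca}_{\infty,I}$ itself; the fixed-interval surrogate $\ca_I$ quarantines this mismatch into the extra additive $\delta_I$ term while preserving the $\cR(\cd,x)$ scaling of the perturbation.
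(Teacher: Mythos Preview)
Your proposal is correct and follows essentially the same approach as the paper: freeze the clipping interval to $I_0 = I(\cd)$, apply \cref{thm:upper} (rescaled to that interval) to the resulting bounded-output surrogate, and then split the sum over $i$ according to whether $I(\cd^{\setminus i}) = I(\cd)$, absorbing the mismatch cases into the extra $\delta_I$. The only cosmetic difference is that the paper normalizes the surrogate explicitly to $[0,1]$ while you invoke the footnote to handle $\pred = I$ directly.
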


As a special case, consider taking $I\left({\cd}\right)$ to be the observed range, that is, \[I\left(\cd\right) = \left[\min_i Y_i, \max_i Y_i\right].\] When $\left(Z_i\right)_{i=1}^{n+1}$ are exchangeable random variables, we can apply \cref{thm:unbounded} with $\delta' = \frac{2}{n}$. Restricting to the empirical range of the $Y_i$'s does not substantially limit the learned regression function~$\hat{f}_{\infty,I}$---it simply requires that predictions cannot lie outside the observed range of the training data (which is already satisfied by many base algorithms, such as nearest neighbors or regression trees, and typically would not substantially alter the output of many other algorithms).
More generally, we can take
$I\left(\cd\right) = \left[Y_{\left(k\right)}, Y_{\left(n+1-k\right)}\right]$ for some fixed $k<n/2$, where $Y_{\left(1\right)}\le\cdots\le Y_{\left(n\right)}$ denote the order statistics of $Y_1,\dots,Y_n$. In this case, we have~$\delta' = \frac{2k}{n}$ in~\eqref{eq:interval-stability}, which allows for some fraction~$\delta'$ of outliers to be removed when constructing the data-dependent range, thus ensuring that ${\cR}\left(\cd, x\right)$ is not too large.

{Of course, there are many other potential strategies for defining the data-dependent range $I\left(\cd\right)$, and the benefits and drawbacks of these various choices depend on the specific data distribution and base algorithm. Exploring these options, and designing practical versions of this procedure to provide accurate fitted models with meaningful stability guarantees, is an important question for future work.}

\subsection{Hardness of Worst-case Stability}\label{sec-worst-case}

Our results above establish that $\left(\varepsilon, \delta\right)$-stability can be guaranteed for any (bounded) base algorithm even for very small $\varepsilon$---for instance, taking $p\in\left(0,1\right)$ to be a constant, we can choose $\varepsilon= O\left(n^{-1/2}\right)$. {Next, we show} that no analogous result exists for worst-case stability---indeed, for this stricter definition, stability cannot be guaranteed for any $\varepsilon<p$, and therefore $\varepsilon = O\left(n^{-1/2}\right)$ can only be guaranteed via massively subsampling the data with $p=O\left(n^{-1/2}\right)$.

\begin{theorem}\label{thm:worst} Fix~$\cq_n$ and let $\pred=\left[0,1\right]$.
\begin{itemize}
    \item[(i)] For any algorithm~$\ca$, derandomized bagging is worst-case~$\left(\p, \delta\right)$-stable for all $\delta$.
    \item[(ii)] If $\left|\cx\right| > 1$, there is a base algorithm~$\ca^\dagger$ such that derandomized bagging is not worst-case $\left(\varepsilon, \delta\right)$-stable for any $\varepsilon < \p$ and $\delta<1$.
\end{itemize}
\end{theorem}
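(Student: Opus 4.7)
The plan for Part (i) is to reduce to the index $i = n$ via Assumption~\ref{assumption:symmetry} and then apply a two-term decomposition of the bagged prediction. Writing $G(r) := \E_\xi[\ca(\cd_r;\xi)(x)] \in [0,1]$, the law of total expectation gives $\hat{f}_\infty(x) = \p\cdot\E_{r\sim\cq_n}[G(r)\mid n\in r] + (1-\p)\cdot\E_{r\sim\cq_n}[G(r)\mid n\notin r]$. Because $\cq_{n-1}$ is by definition the law of $r\sim\cq_n$ conditioned on $n\notin r$ (per \eqref{eqn-define-Qn-Qn1}), and because $(\cd^{\setminus n})_r = \cd_r$ whenever $r\in\seqnn$, the leave-one-out prediction $\hat{f}_\infty^{\setminus n}(x)$ equals the second conditional expectation. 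Subtracting yields $\hat{f}_\infty(x) - \hat{f}_\infty^{\setminus n}(x) = \p\bigl(\E[G\mid n\in r] - \E[G\mid n\notin r]\bigr)$, whose absolute value is at most $\p$ since $G\in[0,1]$. Because derandomized bagging is deterministic in $\xi$, the event $\{|\hat{f}_\infty(x) - \hat{f}_\infty^{\setminus n}(x)|>\p\}$ has $\xi$-probability zero, giving worst-case $(\p,\delta)$-stability for every $\delta\ge 0$.

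For Part (ii), I would engineer $\ca^\dagger$ to saturate this decomposition. Pick two distinct covariate values $x_0\ne x_1\in\cx$ (available because $|\cx|>1$) and any $y_0\in\cy$, and let $\cd^\dagger$ be the data set whose first $n-1$ entries are $(x_0,y_0)$ and whose last entry is $(x_1,y_0)$. Define $\ca^\dagger(\cd';\xi)$ to return the constant function equal to $\mathbf{1}\{x_1 \text{ appears as a covariate in } \cd'\}$, which takes values in $\{0,1\}\subset[0,1]$. Then at any test point $x$ we have $G(r) = \mathbf{1}\{n\in r\}$, so $\E[G\mid n\in r] = 1$ and $\E[G\mid n\notin r] = 0$, producing $|\hat{f}_\infty(x) - \hat{f}_\infty^{\setminus n}(x)| = \p$ exactly. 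Hence for any $\varepsilon<\p$ the probability $\P_\xi\{|\hat{f}_\infty(x) - \hat{f}_\infty^{\setminus n}(x)|>\varepsilon\}$ equals $1$, exceeding every $\delta<1$, and worst-case $(\varepsilon,\delta)$-stability fails.

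The proof is essentially bookkeeping once this decomposition is in place; no concentration bound or variance computation is needed, and neither direction invokes \cref{thm:upper}. The only points that warrant a careful check are (a) the identification $(\cd^{\setminus n})_r = \cd_r$ for $r\in\seqnn$, which is immediate because $\cd^{\setminus n}$ retains the same indexing on $[n-1]$, and (b) the reduction from a general dropped index $i$ to $i=n$, which uses the permutation invariance of $\cq_n$ together with a relabeling of the training observations that leaves $G$ unchanged. I do not anticipate a genuine obstacle.
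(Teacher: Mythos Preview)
Your proposal is correct and follows essentially the same approach as the paper: the identical decomposition $\hat{f}_\infty = p\,\E[G\mid i\in r] + (1-p)\,\hat{f}_\infty^{\setminus i}$ for Part~(i), and a ``presence-detector'' counterexample for Part~(ii). The only cosmetic difference is that your detector checks for a \emph{fixed} covariate value $x_1$ (so it genuinely needs only $|\cx|>1$), whereas the paper defines $\ca^\dagger(\cd)(x)=\mathbf{1}\{\exists(\tilde x,\tilde y)\in\cd:\tilde x=x\}$ and evaluates at $x=x_i$ with all training covariates taken distinct.
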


Part~\emph{(i)} of the theorem has repeatedly appeared in various forms (see, e.g., \citet[][Theorem~3.1]{poggio2002bagging}, \citet[Proposition~4.3]{elisseeff2005stability} and \citet[Theorem~5]{chen2022debiased}); in Section~\ref{sec-related} we discuss how this observation has led some authors on algorithmic stability to advocate for subsampling a decreasing fraction of the data~$m=o\left(n\right)$ as~$n\to\infty$. In contrast, by moving to average-case stability, our results allow $m=O\left(n\right)$ and even $m=n-o\left(n\right)$, enabling far greater accuracy in the fitted models.

The base algorithm~$\ca^\dagger$ in the proof of part \emph{(ii)} of the theorem memorizes the training data:
\[
\ca^\dagger\left(\cd\right)\left(x\right) := \mathbf{1}\left\{\exists \left(\tilde{x}, \tilde{y}\right)\in \cd : \tilde{x} = x\right\}.
\]
If $x=x_i$ for precisely one $i\in \left[n\right]$, then this training point~$\left(x_i, y_i\right)$ has maximal influence on the value of~$\hat{f}_\infty\left(x\right)$---every bag containing~$\left(x_i, y_i\right)$ predicts 1, and every bag not containing~$\left(x_i, y_i\right)$ leads to a predicts 0. This counterexample can be used to show an even stronger hardness result, for average-case, ``in-sample'' stability (discussed earlier in \cref{sec-stability}): if $x_1,\ldots,x_n$ are all distinct,
\[
\frac{1}{n}\sum_{i=1}^n 1\left\{\left|\hat{f}_\infty^\dagger\left(x_i\right) - \hat{f}_\infty^{\dagger\setminus i}\left(x_i\right)\right| > \varepsilon\right\} = \begin{cases}
1 & \text{if }\varepsilon \le p \\
0 & \text{if }\varepsilon > p
\end{cases},
\]
where $\hat{f}_\infty^\dagger = \ca_\infty^\dagger\left(\cd\right)$. Note, however, that for a fixed $x$, \emph{at most one} index~$i\in \left[n\right]$ can change the bagged prediction by~$p$. This limitation of~$\ca^\dagger$ provides useful intuition for why we may expect a stronger result for our main definition of $\left(\varepsilon, \delta\right)$-stability, Definition~\ref{def:average-case-stability}, where the test point~$x\in \cx$ is fixed. 

\subsection{Alternative Frameworks for the Main Result}\label{sec-alt-frameworks}

In this section, we discuss various implications of our main stability guarantee for related criteria.

\subsubsection{Stability in Expectation}

In Definition~\ref{def:average-case-stability}, average-case algorithmic stability controls the tail of the distribution of leave-one-out perturbations. Some authors~\citep[e.g.,][]{bousquet2002stability,elisseeff2005stability} prefer to work with the expected value of the leave-one-out perturbation. We can consider a version of average-case stability that works with expected values rather than probabilities, requiring that
\begin{align}\label{eq:def-oos-prediction-stability}
    \frac{1}{n}&\sum_{i=1}^n\E_{\xi}\left|\hat{f}\left(x\right) - \hat{f}^{\setminus i}\left(x\right)\right| \le \beta,
\end{align}
for all data sets~$\cd$ of size~$n$ and test points $x\in \cx$, where $\hat{f} = \ca\left(\cd; \xi\right)$ and $\hat{f}^{\setminus i} = \ca\left(\cd^{\setminus i}; \xi\right)$.

\begin{corollary}\label{cor:expectation} In the setting of \cref{thm:upper}, for any $B$, generic bagging~$\widetilde\ca_B$ satisfies stability condition~\eqref{eq:def-oos-prediction-stability} at level $\beta = \beta_{n,m,B}$, where
\begin{equation}\label{eq:prediction-stability-level}
\begin{aligned}
    \beta_{n,m,B} = \sqrt{\frac{1}{4n}\left(\frac{\p}{1-\p} + \frac{\q}{\left(1-\p\right)^2}\right)} + \sqrt{\frac{2\pi}{B}}.
\end{aligned}
\end{equation}
\end{corollary}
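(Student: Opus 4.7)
The plan is to combine a squared stability bound for derandomized bagging with Hoeffding-type control of the Monte Carlo error incurred by using only finitely many bags. The first step is to extract from the proof of \cref{thm:upper} the stronger intermediate inequality
\[
\frac{1}{n}\sum_{i=1}^n \bigl(\hat{f}_\infty(x) - \hat{f}_\infty^{\setminus i}(x)\bigr)^2 \;\le\; \frac{1}{4n}\left(\frac{p}{1-p} + \frac{q}{(1-p)^2}\right),
\]
from which the $(\varepsilon,\delta)$-stability in \cref{thm:upper} follows by Markov applied to the squared perturbation (this is compatible with the footnote noting that the proof actually establishes stability with ``$\ge \varepsilon$'' in place of ``$>\varepsilon$''). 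The squared bound is the natural output of a variance calculation on $\hat{f}_\infty(x) = \E_{r,\xi}[\ca(\cd_r;\xi)(x)]$, where the covariance structure of the indicators $\mathbf{1}_{i\in r}$ produces exactly the $p/(1-p)$ and $q/(1-p)^2$ terms.

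Given this squared bound, Jensen's inequality immediately yields
\[
\frac{1}{n}\sum_{i=1}^n \bigl|\hat{f}_\infty(x) - \hat{f}_\infty^{\setminus i}(x)\bigr| \;\le\; \sqrt{\frac{1}{4n}\left(\frac{p}{1-p}+\frac{q}{(1-p)^2}\right)},
\]
which is exactly the first term of $\beta_{n,m,B}$. This settles the derandomized limit.

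For the finite-$B$ case, I would apply the triangle inequality,
\[
\hat{f}_B(x) - \hat{f}_B^{\setminus i}(x) \;=\; \bigl(\hat{f}_\infty(x) - \hat{f}_\infty^{\setminus i}(x)\bigr) + \bigl(\hat{f}_B(x) - \hat{f}_\infty(x)\bigr) - \bigl(\hat{f}_B^{\setminus i}(x) - \hat{f}_\infty^{\setminus i}(x)\bigr),
\]
take expected absolute values, and average over $i$. The first piece contributes $\sqrt{C}$ by the preceding paragraph. For each of the remaining pieces, observe that $\hat{f}_B(x) = \tfrac{1}{B}\sum_{b=1}^B \hat{f}^{(b)}(x)$ is the mean of $B$ i.i.d.\ random variables in $[0,1]$ with mean $\hat{f}_\infty(x)$ (and similarly for the $\setminus i$ version). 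Hoeffding gives $\P_{r,\xi}(|\hat{f}_B(x) - \hat{f}_\infty(x)|>t) \le 2e^{-2Bt^2}$, and integrating the tail,
\[
\E|\hat{f}_B(x) - \hat{f}_\infty(x)| \;=\; \int_0^\infty \P(|\hat{f}_B(x) - \hat{f}_\infty(x)|>t)\,dt \;\le\; \int_0^\infty 2e^{-2Bt^2}\,dt \;=\; \sqrt{\frac{\pi}{2B}}.
\]
Adding the two Monte Carlo contributions gives $2\sqrt{\pi/(2B)} = \sqrt{2\pi/B}$, completing the bound.

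The main obstacle is that the tight leading factor $\sqrt{C}$ (rather than $2\sqrt{C}$) demands the squared bound, not merely the tail form: if one plugs $\P(|\Delta_i^\infty|>\varepsilon)\le \min(1,C/\varepsilon^2)$ into $\E|\Delta_i^\infty|=\int_0^\infty \P(|\Delta_i^\infty|>\varepsilon)\,d\varepsilon$ and optimizes the cutoff, one gets $\int_0^{\sqrt{C}} 1\,d\varepsilon + \int_{\sqrt{C}}^\infty C/\varepsilon^2\,d\varepsilon = 2\sqrt{C}$, which is off by a factor of two. Recovering the advertised constant therefore hinges on isolating the squared-perturbation inequality already sitting inside the proof of \cref{thm:upper}, and using it via Jensen rather than via Markov.
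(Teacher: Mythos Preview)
Your proposal is correct and matches the paper's proof essentially line for line: the paper integrates Hoeffding to get $\sqrt{\pi/(2B)}$ for each Monte Carlo piece, invokes the squared-perturbation bound (packaged there as the $k=2$ case of a separate $\ell_k$ result, \cref{thm:general-lp}, proved by the same Cauchy--Schwarz/variance calculation you describe but with weights $s_iL_i$ in place of $s_i\mathbf{1}_{i\in\mathcal{K}}$) together with Jensen to get the $\sqrt{C}$ term, and combines via the triangle inequality. Your observation that the tail form alone would lose a factor of two is exactly the reason the paper routes through the $\ell_2$ bound rather than through \cref{thm:upper} directly.
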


Note that the scaling of~$\beta$ in this result is comparable to the scaling of $\varepsilon$ in \cref{thm:upper-random} if we take $\delta,\delta'$ to be constant. The result holds for any~$B$, including the case of derandomized bagging by taking $B\to\infty$.

\subsubsection{Stability in the Loss}\label{sec:loss-stability}

Building on earlier definitions of stability \citep{kearns1999algorithmic, bousquet2002stability}, \citet[][Definition 7]{elisseeff2005stability} say that a randomized algorithm~$\ca$ satisfies \emph{random hypothesis stability} at level $\beta$ with respect to the loss function~$\ell$ and distribution~$P$ if the following holds:
\begin{align}\label{eq:hypothesis-stability}
\forall i\in\left\{1,\ldots,n\right\},~\E_{\left(X_i, Y_i\right)_{i=1}^{n+1}\simiid P, \xi}\left|\ell\left(\hat{f}\left(X_{n+1}\right), Y_{n+1}\right) - \ell\left(\hat{f}^{\setminus i}\left(X_{n+1}\right), Y_{n+1}\right)\right| \le \beta,
\end{align}
where $\hat{f} = \ca\left(\cd; \xi\right)$, $\hat{f}^{\setminus i} = \ca\left(\cd^{\setminus i}; \xi\right)$, $\cd = \left(X_i, Y_i\right)_{i=1}^n$. Our next result records the straightforward observation that Corollary~\ref{cor:expectation} implies random hypothesis stability with respect to any loss~$\ell$ that is Lipschitz in its first argument.

\begin{corollary}\label{cor:loss-stability} Let~$\pred=\left[0,1\right]$. Fix a distribution~$\mathcal{Q}_n$ on~$\seqn$ satisfying Assumptions~\ref{assumption:symmetry} and~\ref{assumption:nondegeneracy}, and let $\cq_{n-1}$ be defined as in~\eqref{eqn-define-Qn-Qn1}. Let $P$ denote any distribution on $\cx\times \cy$, and let $\ell : \pred\times\cy \to \R_+$ denote any loss function that is $L$-Lipschitz in its first argument. For any algorithm~$\ca$ and any $B$, generic bagging~$\widetilde{\ca}_B$ satisfies random hypothesis stability~\eqref{eq:hypothesis-stability} at level~$\beta = L\beta_{n,m,B}$, where $\beta_{n,m,B}$ is defined as in~\cref{eq:prediction-stability-level}.
\end{corollary}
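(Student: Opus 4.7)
The plan is to reduce the random hypothesis stability bound~\eqref{eq:hypothesis-stability} to the prediction-stability-in-expectation result of Corollary~\ref{cor:expectation}, by combining Lipschitzness of $\ell$ with exchangeability of the iid training data. Under $P^{\otimes(n+1)}$, the points $(X_1,Y_1),\ldots,(X_n,Y_n)$ are exchangeable, and because the resampling distribution $\cq_n$ is invariant under permutations of the training indices (Assumption~\ref{assumption:symmetry}), the bagged algorithm $\widetilde\ca_B$ treats its training inputs symmetrically. This will let us replace the per-$i$ requirement in~\eqref{eq:hypothesis-stability} by an average over $i=1,\ldots,n$, which is exactly the form controlled by Corollary~\ref{cor:expectation}.

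Concretely, I would proceed in three steps. First, apply $L$-Lipschitzness of $\ell$ in its first argument to obtain the pointwise bound
\[
\bigl|\ell(\hat{f}(X_{n+1}), Y_{n+1}) - \ell(\hat{f}^{\setminus i}(X_{n+1}), Y_{n+1})\bigr| \le L\bigl|\hat{f}(X_{n+1}) - \hat{f}^{\setminus i}(X_{n+1})\bigr|.
\]
Second, take the full expectation over $(X_j,Y_j)_{j=1}^{n+1}$ and $\xi$; iid exchangeability together with the symmetry of $\widetilde\ca_B$ in its training inputs implies that this expectation does not depend on $i$, hence it equals its own average over $i$, yielding the upper bound
\[
\frac{L}{n}\sum_{j=1}^n \E\bigl|\hat{f}(X_{n+1})-\hat{f}^{\setminus j}(X_{n+1})\bigr|.
\]
Third, condition on the training data $\cd$ and the test input $X_{n+1}$: Corollary~\ref{cor:expectation} applies deterministically to every fixed dataset and test point, so $\frac{1}{n}\sum_{j=1}^n \E_\xi\bigl|\hat{f}(X_{n+1}) - \hat{f}^{\setminus j}(X_{n+1})\bigr| \le \beta_{n,m,B}$, and taking the outer expectation over $(\cd, X_{n+1})$ preserves this bound, giving $\beta = L\beta_{n,m,B}$ as desired.

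There is essentially no substantive obstacle here; the argument is a short reduction once Corollary~\ref{cor:expectation} is in hand. The one subtlety worth flagging is the exchangeability step: Assumption~\ref{assumption:symmetry} delivers symmetry over the training indices (rather than over positions within a bag), but this is precisely what is needed since $\hat{f}^{\setminus i}$ is determined entirely by which index is removed. Because Corollary~\ref{cor:expectation} holds uniformly over all $\cd$ and $x$, no measure-theoretic subtleties arise when integrating out $(\cd, X_{n+1})$ at the end.
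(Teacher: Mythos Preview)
Your proposal is correct and follows essentially the same approach as the paper's proof: apply the Lipschitz bound, use exchangeability of the iid data together with the symmetry of $\widetilde\ca_B$ (from Assumption~\ref{assumption:symmetry}) to replace the fixed index $i$ by an average over $j=1,\ldots,n$, and then invoke Corollary~\ref{cor:expectation} conditionally on $(\cd,X_{n+1})$. Your write-up is in fact slightly more explicit than the paper's about the exchangeability step.
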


In fact, our main result implies that the inequality in \eqref{eq:hypothesis-stability} holds (on average over~$i\in \left[n\right]$) even \emph{conditional on the training data~$\cd$ and the test point~$\left(X_{n+1}, Y_{n+1}\right)$}, eliminating the assumption that the data are iid---in fact, in our result, the test point can be adversarially chosen. 

\subsubsection{Replace-one Stability}

The stability definitions in this paper concern the leave-one-out perturbation $\left|\hat{f}\left(x\right) - \hat{f}^{\setminus i}\left(x\right)\right|$. Alternative definitions, used for example by \citet{shalev2010learnability}, are obtained by considering a `replace-one' perturbation $\left|\hat{f}\left(x\right) - \hat{f}^{\left(i\right)}\left(x\right)\right|$, where 
\[
\hat{f}^{\left(i\right)} = \ca\left(\cd^{\left(i\right)}; \xi\right) 
\qquad\text{and}\qquad
\cd^{\left(i\right)} = \cd^{\setminus i} \cup \left(Z_i'\right).
\]
 We say that a randomized algorithm~$\ca$ satisfies \emph{random replace-one hypothesis stability} $\beta$ with respect to the loss function~$\ell$ and distribution~$P$ if the following holds:
 \begin{align}
     \forall i\in\left\{1,\ldots,n\right\},~\E_{\left(X_i, Y_i\right)_{i=1}^{n},\left(X_i', Y_i'\right)\simiid P, \xi}\left|\ell\left(\hat{f}\left(x\right), y\right) - \ell\left(\hat{f}^{\left(i\right)}\left(x\right), y\right)\right| \le \beta.
 \end{align}
 Stability to leave-one-out perturbations is typically stronger than stability to replace-one perturbations. To see this, note that, by the triangle inequality, the replace-one perturbation can be bounded as
\[
\left|\hat{f}\left(x\right) - \hat{f}^{\left(i\right)}\left(x\right)\right|
\le \left|\hat{f}\left(x\right) - \hat{f}^{\setminus i}\left(x\right)\right| + \left|\hat{f}^{\left(i\right)}\left(x\right) - \hat{f}^{\setminus i}\left(x\right)\right|,
\]
where both terms on the right-hand-side are leave-one-out perturbations. A guarantee for replace-one stability thus follows immediately from Corollary~\ref{cor:expectation}.

\begin{corollary} In the setting of Corollary~\ref{cor:loss-stability}, generic bagging satisfies random replace-one hypothesis stability at level~$\beta = 2L\beta_{n,m,B}$, where $\beta_{n,m,B}$ is defined in \cref{eq:prediction-stability-level} and $\hat{f}_B^{\left(i\right)}$ is obtained by running generic bagging (Algorithm~\ref{alg:bagging}) with base algorithm $\ca$, data set $\cd^{\left(i\right)}$, and resampling distribution~$\cq_n$.
\end{corollary}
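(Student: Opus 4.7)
The plan is to reduce replace-one stability to two leave-one-out stability comparisons and then invoke Corollary~\ref{cor:expectation}. First, since $\ell$ is $L$-Lipschitz in its first argument,
\[
\bigl|\ell(\hat{f}_B(x), y) - \ell(\hat{f}_B^{(i)}(x), y)\bigr| \le L\bigl|\hat{f}_B(x) - \hat{f}_B^{(i)}(x)\bigr|,
\]
and the triangle inequality flagged in the discussion preceding the statement gives
\[
\bigl|\hat{f}_B(x) - \hat{f}_B^{(i)}(x)\bigr| \le \bigl|\hat{f}_B(x) - \hat{f}_B^{\setminus i}(x)\bigr| + \bigl|\hat{f}_B^{(i)}(x) - \hat{f}_B^{\setminus i}(x)\bigr|.
\]
It therefore suffices to show that, after taking expectations over the iid data and the internal randomness $\xi$, each right-hand term is bounded by $\beta_{n,m,B}$, whence the total is $2L\beta_{n,m,B}$.

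For the first term, this is exactly the leave-one-out perturbation of $\widetilde{\ca}_B$ applied to the iid sample $\cd=(Z_1,\dots,Z_n)$ evaluated at $X_{n+1}$. Corollary~\ref{cor:expectation} provides the per-dataset bound $\frac{1}{n}\sum_{j=1}^n \E_\xi\bigl|\hat{f}_B(x) - \hat{f}_B^{\setminus j}(x)\bigr| \le \beta_{n,m,B}$, uniformly in $\cd$ and $x$. Taking the outer expectation over $(Z_1,\dots,Z_n, X_{n+1})\simiid P$ and using that these draws are exchangeable (so the expectation $\E\bigl|\hat{f}_B(X_{n+1}) - \hat{f}_B^{\setminus j}(X_{n+1})\bigr|$ is independent of $j$) converts the average-over-$j$ bound into a per-$i$ bound of $\beta_{n,m,B}$.

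For the second term, the key observation is that $\cd^{(i)} = (Z_1,\dots,Z_{i-1}, Z_i', Z_{i+1},\dots,Z_n)$ is itself an iid sample of size $n$ from $P$, and deleting its $i$-th entry $Z_i'$ recovers $\cd^{\setminus i}$. Hence $\bigl|\hat{f}_B^{(i)}(x) - \hat{f}_B^{\setminus i}(x)\bigr|$ is the leave-one-out perturbation of $\widetilde{\ca}_B$ with respect to the iid dataset $\cd^{(i)}$, so the same symmetry argument bounds its expectation by $\beta_{n,m,B}$. Combining with the previous bound and the Lipschitz step yields the claimed $2L\beta_{n,m,B}$.

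The only slightly delicate point is passing from the deterministic average-over-$i$ guarantee of Corollary~\ref{cor:expectation} to a per-$i$ guarantee in expectation. This is not a statement about symmetry of the algorithm $\widetilde{\ca}_B$ (which we do not assume), but about symmetry of the \emph{joint distribution} of the data: because $Z_1,\dots,Z_n,Z_i',X_{n+1},Y_{n+1}$ are iid, the distribution is invariant under permutations of the training indices, which is exactly what lets us replace the sum over $j$ by $n$ times any single term.
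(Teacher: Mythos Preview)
Your proof is correct and follows exactly the paper's approach: the paper simply invokes the triangle inequality displayed before the corollary together with Corollary~\ref{cor:expectation} (via Corollary~\ref{cor:loss-stability}).

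One correction to your last paragraph, however: the claim that $\E\bigl|\hat{f}_B(X_{n+1}) - \hat{f}_B^{\setminus j}(X_{n+1})\bigr|$ is independent of $j$ \emph{does} depend on symmetry of the bagged algorithm, not merely on exchangeability of the data. For a quick counterexample, if $\cq_n$ deterministically returned the bag $r=(1,\dots,m)$ and $\ca$ depended only on the first data point in its input, then removing index $1$ would have a very different effect than removing index $2$, even with iid data. What rescues the argument is Assumption~\ref{assumption:symmetry}: symmetry of $\cq_n$ (and of the induced $\cq_{n-1}$) makes $\widetilde{\ca}_B$ permutation-invariant in distribution over its internal randomness, and this combined with iid data yields the per-$i$ invariance you need. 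The paper uses exactly this step, without comment, in its proof of Corollary~\ref{cor:loss-stability}.
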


\section{Experiments}\label{sec-experiments}

In this section, we study the stability of subbagging in simulation experiments. We use scikit-learn \citep{scikit-learn} for all base algorithms. Code to reproduce all experiments is available at \url{https://github.com/jake-soloff/subbagging-experiments}.

\subsection{Data and Methods}

We consider four simulation settings:
\begin{itemize}
\item {\bf Setting 1:} We simulate from the following data generating process:
\[  X_i \simiid \mathcal{N}\left(0, I_d\right), \ Y_i\mid X_i \simind \textnormal{Bernoulli}\left(\frac{1}{1+\exp\left(-X_i^\top\theta^*\right)}\right),\]
with sample size $n=500$ and dimension~$d=200$, and where~$\theta^* = \left(.1,\ldots,.1\right)\in\R^d$. The base algorithm $\ca$ is the output of $\ell_2$-regularized logistic regression, given by $\ca\left(\cd\right)\left(x\right) = \hat{f}_{\hat\theta}\left(x\right) := \left(1+e^{-x^\top\hat\theta}\right)^{-1}$, where
\[
\hat\theta = \argmin_{\theta\in \R^d} \left\{C\sum_{i=1}^n\left(-Y_i\log\left(\hat{f}_\theta\left(X_i\right)\right)-\left(1-Y_i\right)\log\left(1-\hat{f}_\theta\left(X_i\right)\right)\right) + \frac{1}{2}\left\|\theta\right\|_2^2\right\}.
\]
We use \verb+sklearn.linear_model.LogisticRegression+, setting options \verb+penalty=`l2'+, \verb+C=1e3/n+ and \verb+fit_intercept=False+, leaving all other parameters at their default values.
\item {\bf Setting 2:} Same as Setting 1, changing only the sample size to $n=1000$.
\item {\bf Setting 3:} Same as Setting 1, changing only the base algorithm $\ca$ to a neural network with a single hidden layer. We use \verb+sklearn.neural_network.MLPClassifier+, setting \verb+hidden_layer_sizes=(40,)+, \verb+solver="sgd"+, \verb+learning_rate_init=0.2+, \verb+max_iter=8+, and \verb+alpha=1e-4+, leaving all other parameters at their default values.
\item {\bf Setting 4:} We simulate from the following data generating process:
\begin{align*}
    \left(X_i,\alpha_i,\gamma_i\right) &\simiid \textnormal{Unif}\left(\left[0,1\right]^d\right)\times \textnormal{Unif}\left(\left[-.25, .25\right]\right)\times \textnormal{Unif}\left(\left[0,1\right]\right),\\
    Y_i &= \sum_{j=1}^d \sin\left(\frac{X_{ij}}{j}\right) + \alpha_i \mathbf{1}\left\{i = 1\Mod 3\right\} + \gamma_i \mathbf{1}\left\{i = 1\Mod 4\right\},
\end{align*}
with $n=500$ and~$d = 40$. Note that the algorithm has access to the observed data~$\mathcal{D} = \left(X_i, Y_i\right)_{i=1}^n$, that is, $\alpha_i$ and $\gamma_i$ are latent variables used only to generate the data~$\mathcal{D}$. We apply \verb+sklearn.tree.DecisionTreeRegressor+ to train the regression trees, setting \verb+max_depth=50+ and leaving all other parameters at their default values. 
\end{itemize}

\begin{figure}[p]
\begin{center}
\includegraphics[width=.98\linewidth]{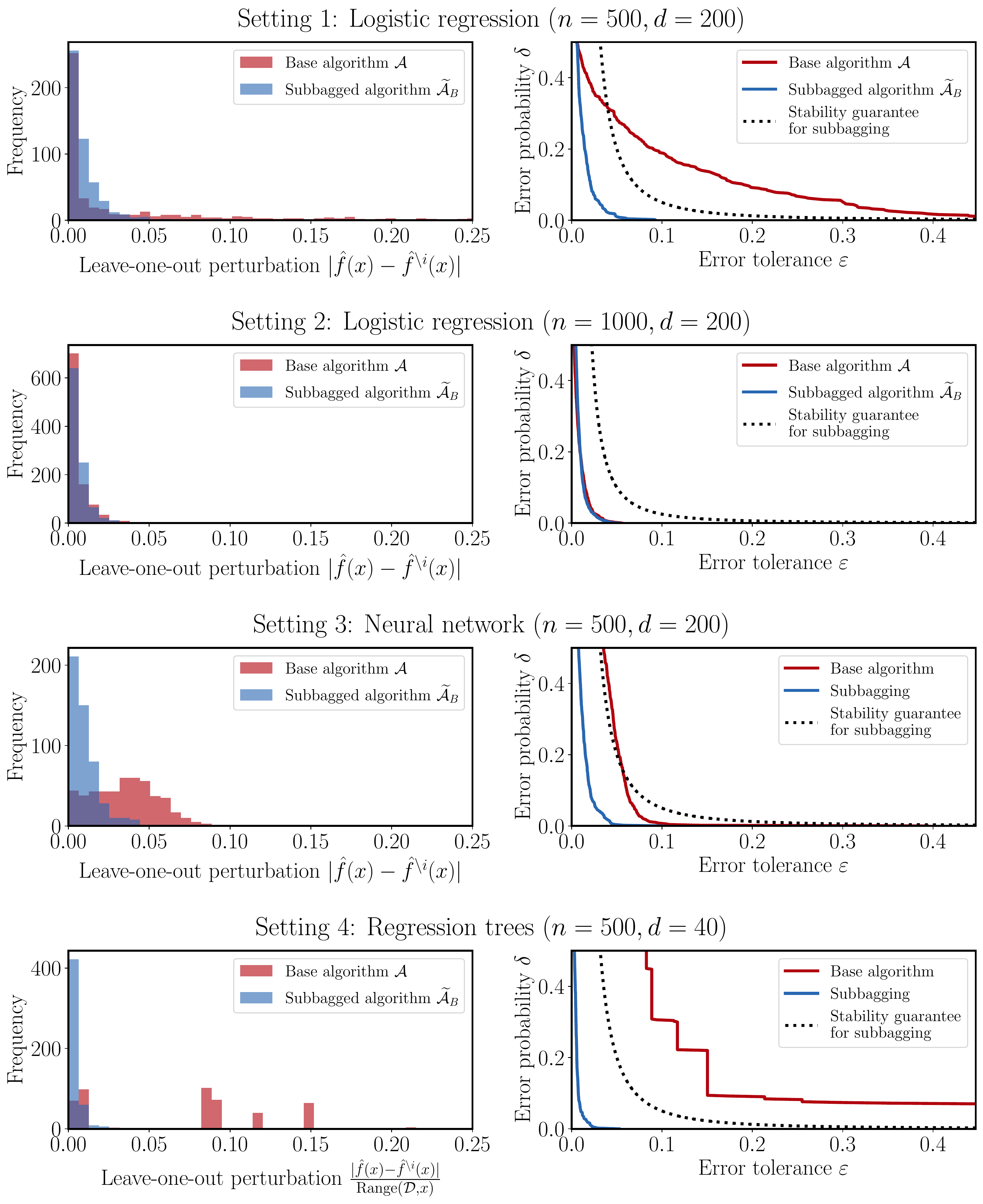}
\vskip -.05in
\caption{Simulation results comparing the stability of subbagging~$\widetilde\ca_B$ to that of the corresponding base algorithm~$\ca$. Left: Histogram of leave-one-out perturbations. Right:  for each $\varepsilon$, the smallest~$\delta$ such that the algorithm is~$\left(\varepsilon, \delta\right)$-stable in the sense of Definition~\ref{def:average-case-stability}. Higher curves thus represent greater instability. In all settings, $m=n/2$ and $B = 10000$.
\vspace{-.2in}
}
\label{fig-experiments}
\end{center}
\end{figure}

\subsection{Results}

Our results are shown in \cref{fig-experiments}. In each setting, we apply the base algorithm~$\ca$ as well as subbagging~$\widetilde\ca_B$ with~$m=n/2$ samples in each bag, using $B=10000$ bags. The left panels of \cref{fig-experiments} show the histogram of leave-one-out perturbations $\left|\hat{f}\left(x\right) - \hat{f}^{\setminus i}\left(x\right)\right|$ for $i\in\left\{1,\ldots,n\right\}$. In the right panels of \cref{fig-experiments}, for a fixed data set and algorithm, we measure stability by plotting, for each value of $\varepsilon$, the smallest value of $\delta$ such that the algorithm is $\left(\varepsilon, \delta\right)$-stable: 
\[\delta = \frac{1}{n}\sum_{i=1}^n1_{\left|\hat{f}\left(x\right)-\hat{f}^{\setminus i}\left(x\right)\right| > \varepsilon}.\]
In each case, the test point~$x=X_{n+1}$ is generated from the same distribution as $X_1,\ldots,X_n$.

For logistic regression (Settings 1 and 2), we see that the subbagged algorithm is highly stable for both values of $n$---in particular, the blue curves lie below the black dotted line, showing that subbagged logistic regression satisfies the theoretical guarantee of \cref{thm:upper}. By contrast, for $n=500$ and $d=200$ (Setting 1), the red curve lies much higher in the plot, showing greater instability; this reveals that the base algorithm, logistic regression (with extremely small regularization), is highly unstable in this regime \citep[see, e.g.,][]{10.1214/18-AOS1789}. For $n=1000$ and $d=200$ (Setting 2), on the other hand, we see that the base algorithm is quite stable---indeed, in this setting, each bag is highly unstable (since $m = n/2 = 500$), but the stability of subbagging is still comparable to that of the base algorithm. These first two settings illustrate our theory by showing that the subbagged algorithm satisfies the stability guarantee regardless of whether the base algorithm is stable. 

In Setting 3, we repeat the same experiment where the base algorithm is a neural network. The neural network base algorithm slightly violates the stability guarantee, and in this case, subbagging improves the stability.

In Setting 4, we simulate from a more complex data generating process. We again see that the subbagged algorithm is highly stable---in particular, the blue curve lies below the black dotted line, showing that subbagged regression trees satisfy the theoretical guarantee of \cref{thm:upper}. By contrast, the red curve lies much higher in the plot, showing greater instability; this reveals that the base algorithm, a regression tree with a maximum depth of $50$, is highly unstable. 

\section{Discussion and Related Work}

In this section, we first discuss some important practical implications of algorithmic stability. Next, we compare our main question to a prior work attempting to certify stability using hypothesis testing \citep{kim2021black}. Finally, we situate our work in the broader literature on the stability of bagging, and give some concluding remarks on the implications of this work.

\subsection{The Importance of Stability}\label{sec:importance}

Stability guarantees are central in a variety of contexts, despite the fact that many widely-used practical algorithms are not stable \citep{xu2011sparse}. For instance, \citet{bousquet2002stability} establish generalization bounds for stable learning algorithms, and \citet{mukherjee2006learning} show that stability is necessary and sufficient for empirical risk minimization to be consistent; related works include \citep{poggio2004general,kutin2012almost,freund2004generalization}. \citet{shalev2010learnability} identify stability as a necessary and sufficient condition for learnability. Stability is further relevant to differential privacy guarantees; assuming worst-case stability (often called ``sensitivity'' in the privacy literature) is a standard starting point for constructing differentially private algorithms \citep{dwork2008differential}. In the field of conformal prediction, distribution-free coverage guarantees rely upon the stability of the underlying estimators \citep[e.g.,][]{steinberger2016leave,steinberger2023conditional,ndiaye2022stable,barber2021predictive}. We now discuss applications of algorithmic stability to generalization and conformal inference in greater detail. 

\subsubsection{Stability and Generalization}

In a landmark work, \citet{bousquet2002stability} greatly expand our understanding of the connection between stability and generalization. In their telling, what distinguishes algorithmic stability from the pervasive uniform convergence theory is the following: whereas the latter aims to control the complexity of the space of learning rules an algorithm~$\ca$ searches over, the former emphasizes how the algorithm explores that space. Algorithmic stability notably first emerged as an invaluable tool to obtain generalization bounds for $k$-nearest neighbors \citep{rogers1978finite}, for which the underlying function class has unbounded complexity. For algorithms like bagging and nearest neighbors, where the strongest (nontrivial) guarantees hold for out-of-sample stability, the empirical risk is not necessarily reflective of test error and instead generalization holds with respect to the leave-one-out error---that is, the average leave-one-out error is a provably accurate estimate of the expected prediction error,
\[
\frac{1}{n}\sum_{i=1}^n \ell\left(\hat{f}^{\setminus i}\left(X_i\right), Y_i\right)\approx \E\left[\ell\left(\hat{f}\left(X\right),Y\right)\right],
\]
where the expected value is taken with respect to a new draw of $\left(X,Y\right)$ while treating $\hat{f}$ as fixed.
For an example of how random hypothesis stability (covered in Corollary~\ref{cor:loss-stability}) leads to polynomial bounds on the generalization error, see \citet[][Theorem 9]{elisseeff2005stability}.

\subsubsection{Predictive Uncertainty Quantification}

Algorithmic stability also plays an important role in the problem of predictive uncertainty quantification. Suppose $\left(X_i, Y_i\right)_{i=1}^{n+1}$ are \iid draws from an unknown distribution $P$, and $Y_{n+1}$ is unobserved. We wish to construct a prediction interval $\hat{C}_{n,\alpha} = \hat{C}_{n,\alpha}\left(X_{n+1}\right)$ (based on the training data $\cd = \left(X_i, Y_i\right)_{i=1}^n$, test covariate $X_{n+1}$ and learning algorithm $\ca$) that has guaranteed predictive coverage, that is,
\begin{align}
    \P\left\{Y_{n+1} \in \hat{C}_{n,\alpha}\left(X_{n+1}\right)\right\} \ge 1-\alpha,
\end{align}
without any restrictions on $\ca$ or $P$. If we wish to use an interval of the form $\hat{C}_{n,\alpha}\left(x\right) = \left[\hat{f}\left(x\right)-\hat{c}, \hat{f}\left(x\right)+\hat{c}\right]$, centered at the learning algorithm's prediction $\hat{f}\left(x\right)$, a natural approach to calibrating the radius $\hat{c}$ is to use the leave-one-out errors $R_i := \left|\hat{f}^{\setminus i}\left(X_i\right) - Y_i\right|$ as representative of the test error $\left|\hat{f}\left(X_{n+1}\right) - Y_{n+1}\right|$. This leads naturally to the classical leave-one-out technique known as the jackknife:
\[
\hat{C}_{n,\alpha}^{\textnormal{Jack}}\left(x\right)
:= \left[\hat{f}\left(x\right)-\hat{c}_\alpha, \hat{f}\left(x\right)+\hat{c}_\alpha\right], 
\]
where $\hat{c}_\alpha := Q_{1-\alpha}\left(\left\{R_i\right\}_{i=1}^n\right)$ is the $1-\alpha$ quantile of the leave-one-out errors $R_i$. 

If the base algorithm $\ca$ is \emph{unstable}, the leave-one-out errors need not be representative of test error at all. In fact, \citet{barber2021predictive} construct a pathological example for which that the jackknife has no coverage, that is,
\[
\P\left\{Y_{n+1} \in \hat{C}^{\textnormal{Jack}}_{n,\alpha}\left(X_{n+1}\right)\right\} = 0.
\]
\citet{barber2021predictive} go on to show that if the base algorithm $\ca$ is $\left(\varepsilon, \delta\right)$-stable, then coverage can be restored by inflating the radius to $\hat{c}_\alpha' := \hat{c}_\alpha + \varepsilon$ and running the procedure at level $\alpha' = \alpha - 2\sqrt{\delta}$. 

\subsection{Is Bagging Needed for Stability?}

Various learning algorithms are known to possess stability guarantees, such as $k$-nearest neighbors \citep{rogers1978finite,devroye1979distribution}, some regularized regression methods such as ridge regression \citep{bousquet2002stability,wibisono2009sufficient}, and models trained with stochastic gradient descent under smoothness assumptions \citep{hardt2016train}. Restricting to algorithms that are theoretically known to be stable can be quite limiting and can sacrifice accuracy in many settings. 

We might instead ask whether it is possible to validate empirically that an algorithm $\ca$ is stable with respect to a given data generating distribution. However,
\citet{kim2021black} show that it is essentially impossible to construct powerful hypothesis tests certifying $\left(\varepsilon, \delta\right)$-stability, without imposing assumptions on the algorithm or on distribution of the data. In their framework, we observe \iid random variables $\cd = \left(Z_i\right)_{i=1}^N$ where $Z_i\simiid P$. We wish to construct a test $\hat{T}$ that returns an answer 1 if we are confident that $\ca$ is $\left(\varepsilon, \delta\right)$-stable, or a 0 otherwise. Suppose we require that $\hat{T}$ obeys the following constraints:
\begin{enumerate}
    \item[(a)] $\hat{T}$ satisfies a universal bound on falsely declaring stability, that is, $\P\left\{\hat{T}=1\right\}\le \alpha$ for any $\ca$ that is \emph{not} $\left(\varepsilon, \delta\right)$-stable (with respect to distribution $P$ and sample size $n$), and
    \item[(b)] $\hat{T}$ is a \emph{black-box} test \citep[see][Definition 2]{kim2021black}, roughly meaning that $\hat{T}$ is only constructed using zeroth order oracle access to the algorithm $\ca$. That is, we may base our accept/reject decision on evaluating the model $\ca$ on data $\widetilde\cd$ that is simulated or resampled from the training data $\cd$, and compute predictions at test points $x$ that are generated similarly, an unlimited number of times.
\end{enumerate}
If a test $\hat{T}$ satisfies both properties (a) and (b) with no further assumptions on the distribution $P$ or on the algorithm $\ca$, their results imply that the power of $\hat{T}$ is upper bounded by
\[
\P\left\{\hat{T}=1\right\}
\le  \left(1-\delta\right)^{-N/n} \alpha,
\]
 for any $\ca$ that \emph{is} $\left(\varepsilon, \delta\right)$-stable (with respect to distribution $P$ and sample size $n$).
In particular, any universally valid black-box test has low power, unless the available data set size $N$ is far larger than the sample size $n$ for which we want to test stability.

In light of this impossibility result, a natural question is whether it is possible to \emph{convert} any algorithm $\ca$ into an $\left(\varepsilon, \delta\right)$-stable algorithm $\widetilde\ca$. Our work establishes the possibility of \emph{black-box stabilization}, that is, guaranteeing some quantifiable level of stability with no knowledge of the inner workings of the base algorithm. Our results support the use of bagging in such settings by certifying a certain level of $\left(\varepsilon, \delta\right)$-stability.

\subsection{Prior Work on the Stability of Bagging}\label{sec-related}

\citet{buhlmann2002analyzing} suggest (sub)bagging is most successful as a smoothing operation, softening hard threshold rules. They measure the instability of a procedure by its asymptotic variance:~$\ca$ is stable at~$x\in\cx$ if~$\hat{f}\left(x\right) \stackrel{p}{\to} f\left(x\right)$ as~$n\to \infty$, for some fixed~$f$. For some hard thresholding rules, they show bagging can reduce asymptotic variance. See also \cite{buja2000smoothing,friedman2007bagging}.

\citet{grandvalet2004bagging, grandvalet2006stability} exposes some limitations of the variance-reduction perspective. In particular, bagging need not reduce variance, and in simple examples its improvement over the base procedure need not relate to the original procedure's variance. 
\citeauthor{grandvalet2004bagging} illustrates through experiments a robustness property of bagging: highly influential data points are systematically de-emphasized. The role of~$\p$ in our main result, \cref{thm:upper}, underscores \citeauthor{grandvalet2004bagging}'s observation that the main stabilizing effect of bagging comes from the removal of high-leverage data points from a certain fraction of bags.

\citet{elisseeff2005stability} generalize standard notions of algorithmic stability~\citep{bousquet2002stability} to randomized algorithms and study (sub)bagging in this context. We can directly compare Corollary~\ref{cor:loss-stability} to the work of \citet[][Proposition 4.4]{elisseeff2005stability}, who also study the random hypothesis stability of subbagging with respect to an~$L$-Lipschitz loss~$\ell$. Their result shows subbagging satisfies condition~\eqref{eq:def-oos-prediction-stability} at the level
\begin{align}\label{eq:improving-stability}
    \beta = Lp\beta_{\ca,m},
\end{align}
where $\beta_{\ca,m}$ denotes the random hypothesis stability of the base algorithm~$\ca$ on data sets of size $m$ with respect to~$\ell_1$ loss. A similar result (under stronger assumptions) was obtained earlier by~\citet{poggio2002bagging}. 

We can interpret this result in two ways. First, if the base algorithm~$\ca$ is stable, the guarantee~\eqref{eq:improving-stability} suggests that bagging maintains or improves upon stability \citep[similar results have been obtained for boosting; see, e.g.,][]{kutin2001interaction}. Generally, we expect the stability of the base algorithm to improve with the sample size (i.e., $\beta_{\ca,m} \ge \beta_{\ca,n}$ for $m \le n$), so~\eqref{eq:improving-stability} does not necessarily imply subbagging improves upon the stability of running the base algorithm $\ca$ on the full data set. Second, the result of \citet{elisseeff2005stability} shows that we can achieve random hypothesis stability $\beta=O\left(n^{-1/2}\right)$ by taking $p=O\left(n^{-1/2}\right)$. By contrast, Corollary~\ref{cor:loss-stability} shows subbagging even half the data ($p=0.5$) can achieve random hypothesis stability~$\beta=O\left(n^{-1/2}\right)$.

\citet[Theorem 5]{chen2022debiased} consider subbagging when~$m=o\left(\sqrt{n}\right)$ and with \iid data. Specializing their result to the case of learning algorithms with bounded outputs, they guarantee worst-case stability at the level~$\varepsilon = o\left(n^{-1/2}\right)$ as long as $B\gg n$. By contrast, our result does not require \iid data, and  gives a faster rate $\varepsilon = o\left(n^{-3/4}\right)$ for fixed $\delta$ and $m=o\left(\sqrt{n}\right)$ (as well as results for larger $m$, e.g., for $m=O\left(n\right)$).

\subsection{Conclusion}

Distribution-free uncertainty quantification 
yields principled statistical tools which input black-box machine learning models and produce predictions with statistical guarantees, such as distribution-free prediction or calibration. Assumption-free stability is an important addition to this list, with a number of practical implications. Our work establishes assumption-free stability for bagging applied to any base algorithm with bounded outputs. These results suggest several avenues for future investigations, including formalizing lower bounds for distribution-free, black-box stabilization and characterizing the (sub)optimality of bagging. 

\nocite{devroye1979distribution2}

\section*{Acknowledgements}

RFB was supported by the National Science Foundation via grants DMS-1654076 and DMS-2023109, and by the Office of Naval Research via grant N00014-20-1-2337. JAS was supported by NSF DMS-2023109. RW was supported by NSF DMS-2023109, AFOSR FA9550-18-1-0166, NSF DMS-AWD00000326 and Simons Foundation MP-TMPS-00005320.

\appendix
\section{Proofs}\label{appendix-proofs}

This section contains proofs of all theoretical results from the main paper.

\subsection{Proof of Theorem~\ref{thm:upper}}

We abbreviate predicted values using $\hat{y} \coloneqq \hat{f}_\infty\left(x\right)$ and $\hat{y}^{\setminus i} \coloneqq
\hat{f}_\infty^{\setminus i}\left(x\right)$. Define
\[\mathcal{K} \coloneqq \left\{i \in [n] : \left|\hat{y} - \hat{y}^{\setminus i}\right| > \varepsilon\right\},\]
 the set of data points with large leave-one-out perturbation, and
let $K = \left|\mathcal{K}\right|$.
From Definition~\ref{def:average-case-stability}, we want to show~$K\le n\delta$.
Summing all the inequalities defining~$\mathcal{K}$ gives
\[
K\varepsilon \le \sum_{i\in \mathcal{K}}\left|\hat{y} - \hat{y}^{\setminus i}\right| =: L_1\left(\mathcal{K}\right).
\]
We now bound the error~$L_1\left(\mathcal{K}\right)$ on the right-hand side.

\emph{Step 1: Simplifying the leave-one-out perturbation.} For any bag~$r\in \seqn$, denote the value of our prediction using data~$\cd_r$ by \[\hat{y}^{\left(r\right)}:= \E_\xi [\ca\left(\cd_r; \xi\right)\left(x\right)].\]  
The aggregate prediction $\hat{y}$ can be expressed as
\[ \hat{y} =\E_{r\sim \cq_n,\xi}\big[\ca\left(\cd_r; \xi\right)\left(x\right)\big] =\E_{r\sim \cq_n,\xi}\big[ \hat{y}^{\left(r\right)}\big],\]
while $\hat{y}^{\setminus i}$ can be expressed as
\[ \hat{y}^{\setminus i} =\E_{r\sim \cq_{n-1},\xi}\left[\ca\left(\left(\cd^{\setminus i}\right)_r;\xi\right)\left(x\right)\right] =\E_{r\sim \cq_n}\big[\hat{y}^{\left(r\right)} \,\big\vert\, i\not\in r\big],\]
where the last step holds by symmetry (Assumption~\ref{assumption:symmetry}). Using the definition of conditional expectation, we have 
\begin{align*}
\hat{y}-\hat{y}^{\setminus i}
&= \E_{r \sim \cq_n}\left[\hat{y} - \hat{y}^{\left(r\right)} \,\middle|\, i\not\in r\right] \\
&= \frac{1}{1-p}\E_{r \sim \cq_n}\left[\left(\hat{y} - \hat{y}^{\left(r\right)}\right)\mathbf{1}\left\{i\not\in r\right\}\right].
\end{align*}

\emph{Step 2: Expressing $L_1\left(\mathcal{K}\right)$ as an expectation.} Define
\[
s_i := 
\textnormal{sign}\left(\hat{y}-\hat{y}^{\setminus i}\right) \cdot\mathbf{1}_{i\in \mathcal{K}}.\]
For each $i\in \mathcal{K}$, $\left|\hat{y}-\hat{y}^{\setminus i}\right|=s_i\left(\hat{y}-\hat{y}^{\setminus i}\right)$, so by Step 1, 
\begin{align}\label{eq:L1-expectation}
    L_1\left(\mathcal{K}\right)
=\frac{1}{1-p}\E_{r\sim\cq_n}\left[\left(\hat{y}-\hat{y}^{\left(r\right)}\right)\sum_is_i\mathbf{1}_{i\not\in r}\right].
\end{align}
\emph{Step 3: Bounding $L_1\left(\mathcal{K}\right)$.} Since $\hat{y}^{\left(r\right)}$ has mean $\hat{y}$, we may rewrite the right-hand side of~\eqref{eq:L1-expectation} as
\[
L_1\left(\mathcal{K}\right)
=\frac{1}{1-p}\E_{r\sim\cq_n}\left[\left(\hat{y}-\hat{y}^{\left(r\right)}\right)\left(\sum_is_i\mathbf{1}_{i\not\in r} - \E\left[\sum_is_i\mathbf{1}_{i\not\in r}\right]\right)\right].
\]
Applying Cauchy--Schwarz,
\begin{align}\label{eq:Cauchy-Schwarz} 
L_1\left(\mathcal{K}\right)
&\le \frac{1}{2\left(1-p\right)}\sqrt{\textnormal{Var}\left(\sum_is_i\mathbf{1}_{i\not\in r}\right)},
\end{align}
where we have used $\textnormal{Var}\left(\hat{y}^{\left(r\right)}\right)\le \frac{1}{4}$, known as Popoviciu's inequality, which uses $\hat{y}^{\left(r\right)}\in [0,1]$. We calculate the other variance term as 
\begin{align*}
    \textnormal{Var}\left(\sum_is_i\mathbf{1}_{i\not\in r}\right)
    &= p\left(1-p\right)\sum_i s_i^2 - q\sum_{i\ne j} s_is_j  \\
    &= \left(p\left(1-p\right) + q\right)\sum_i s_i^2 - q\Big(\sum_{i} s_i\Big)^2 \\
    &\le K\left(p\left(1-p\right) + q\right),
\end{align*}
since $q\ge 0$ and $\sum_i s_i^2 = \sum_i \mathbf{1}_{i\in \mathcal{K}} = K$. 
Combining everything,
\begin{align*}
K\varepsilon &\le \frac{1}{2\left(1-p\right)}\sqrt{K\left(p\left(1-p\right) + q\right)}. 
\end{align*}
Choosing~$\left(\varepsilon, \delta\right)$ to satisfy~\eqref{eq:eps-delta} implies~$K\le n\delta$. 

\subsection{Proof of Theorem~\ref{thm:upper-random}} Let $\hat{f}_B$ denote the result of bagging~$\ca$ on~$\cd$ with~$B$ bags, and similarly~$\hat{f}_B^{\setminus i}$ is the result of bagging~$\ca$ on~$\cd^{\setminus i}$ with~$B$ bags. We want to show
\begin{align*}
        \frac{1}{n}\sum_{i=1}^n\P_{\pmb{\xi}, \pmb{r}}\left\{\left|\hat{f}_B\left(x\right) - \hat{f}_B^{\setminus i}\left(x\right)\right| > \varepsilon + \sqrt{\frac{2}{B}\log\frac{4}{\delta'}}\right\}\le \delta+\delta',
\end{align*}
where~$\pmb{\xi}=\left(\xi^{\left(1\right)},\dots,\xi^{\left(B\right)}\right)$ and~$\pmb{r}=\left(r^{\left(1\right)},\dots,r^{\left(B\right)}\right)$ capture the randomness in the algorithm~$\ca$ and bagging, respectively. By the triangle inequality and union bound,
\begin{align*}
\frac{1}{n}\sum_{i=1}^n&\P_{\pmb{\xi}, \pmb{r}}\left\{\left|\hat{f}_B\left(x\right) - \hat{f}_B^{\setminus i}\left(x\right)\right| > \varepsilon + \sqrt{\frac{2}{B}\log\frac{4}{\delta'}}\right\} \\
&\le \frac{1}{n}\sum_{i=1}^n\P_{\pmb{\xi}, \pmb{r}}\left\{\left|\hat{f}_B\left(x\right) - \hat{f}_\infty\left(x\right)\right| >\sqrt{\frac{1}{2B}\log\frac{4}{\delta'}}\right\} \\
&~~~~~+ \frac{1}{n}\sum_{i=1}^n\mathbf{1}\left\{\left|\hat{f}_\infty\left(x\right) - \hat{f}_\infty^{\setminus i}\left(x\right)\right| > \varepsilon\right\} \\
&~~~~~+ \frac{1}{n}\sum_{i=1}^n\P_{\pmb{\xi}, \pmb{r}}\left\{\left|\hat{f}_\infty^{\setminus i}\left(x\right) - \hat{f}_B^{\setminus i}\left(x\right)\right| >\sqrt{\frac{1}{2B}\log\frac{4}{\delta'}}\right\}. 
\end{align*}
By \cref{thm:upper} the middle term on the right-hand side is at most~$\delta$. By Hoeffding's inequality,
\[
\P_{\pmb{\xi}, \pmb{r}}\left\{\left|\hat{f}_B\left(x\right) - \hat{f}_\infty\left(x\right)\right| > \sqrt{\frac{1}{2B}\log\frac{4}{\delta'}}\right\}
\le \frac{\delta'}{2}
\]
and similarly for~$\left|\hat{f}_B^{\setminus i}\left(x\right) - \hat{f}_\infty^{\setminus i}\left(x\right)\right|$.

\subsection{Proof of Theorem~\ref{thm:lower}} Let~$K = 1+\lfloor\delta n\rfloor$ and~$\eta = \frac{K}{n}>\delta$. For any positive integer~$m$, define
\[
\ca^\sharp\Big(\left(X_1,Y_1\right),\ldots,\left(X_m, Y_m\right)\Big)\left(x\right)
\coloneqq \mathbf{1}\left\{\sum_{i=1}^m X_i > \frac{mK}{n}\right\}.
\]

Define~$\cd = \{\left(x_i,y_i\right)\}_{i=1}^n$ where~$x_i = 1$ for $i\le K$ and~$x_i=0$ for $i > K$. For a bag~$r$ consisting of $m$ indices sampled without replacement, the algorithm $\ca^\sharp$ therefore returns the prediction $\hat{y}\left(r\right) = \mathbf{1}\left\{\sum_{i\in r}X_i > \frac{mK}{n}\right\}$. 
Let~$\hat{y}$ denote the average prediction, i.e., the result of subbagging $\ca^\sharp$, and let~$\hat{y}^{\setminus i}$ denote the average prediction over bags excluding~$i$.
It suffices to show that~$\left|\hat{y} - \hat{y}^{\setminus i}\right| >\varepsilon$ for each $i$ with~$x_i = 1$, since we then have
\[
\frac{1}{n}\sum_{i=1}^n \mathbf{1}\left\{\left|\hat{y} - \hat{y}^{\setminus i}\right| > \varepsilon\right\} \ge \frac{K}{n} = \eta > \delta,
\]
verifying that $\ca^\sharp$ fails to be $\left(\varepsilon, \delta\right)$-stable.
Let~$\hat{y}^i$ denote the average over all bags containing~$i$. Then for any~$i\leq K$ and $j>K$ (i.e., ~$X_i = 1$ and~$X_j=0$), by symmetry we can calculate
\begin{align*}
\hat{y} &= \E_{r\sim\cq_n}\left[\hat{y}\left(r\right)\right]\\
&=\frac{K}{n}\E_{r\sim\cq_n}\left[\hat{y}\left(r\right)\mid i_1\leq K\right] + \frac{n-K}{n}\E_{r\sim\cq_n}\left[\hat{y}\left(r\right)\mid i_1>K\right]\\
&=\eta\E_{r\sim\cq_n}\left[\hat{y}\left(r\right)\mid i_1=i\right] + \left(1-\eta\right)\E_{r\sim\cq_n}\left[\hat{y}\left(r\right)\mid i_1=j\right]
\\
&=\eta\E_{r\sim\cq_n}\left[\hat{y}\left(r\right)\mid i\in r\right] + \left(1-\eta\right)\E_{r\sim\cq_n}\left[\hat{y}\left(r\right)\mid j\in r\right]\\
&=\eta\hat{y}^i + \left(1-\eta\right)\hat{y}^j.\end{align*}
Similarly, we have
\begin{align*}
\hat{y} &= \E_{r\sim\cq_n}\left[\hat{y}\left(r\right)\right]\\
&=p\E_{r\sim\cq_n}\left[\hat{y}\left(r\right)\mid i\in r\right] + \left(1-p\right)\E_{r\sim\cq_n}\left[\hat{y}\left(r\right)\mid i\notin r\right]\\
& = p\hat{y}^i + \left(1-p\right)\hat{y}^{\setminus i}.\end{align*}
Combining these calculations,
\[
\hat{y} -\hat{y}^{\setminus i}
 =\frac{p}{1-p}\left(\hat{y}^i - \hat{y}\right)= \frac{p}{1-p}\left(1-\eta\right)\left(\hat{y}^i-\hat{y}^j\right).
\]
Similarly, noting that $\P\left\{j\in r \mid i\in r\right\} = \frac{m-1}{n-1}$, we have
\[
\hat{y}^i = \frac{m-1}{n-1}\hat{y}^{ij} + \frac{n-m}{n-1}\hat{y}^{i\setminus j},
\]
where~$\hat{y}^{ij}$ averages $\hat{y}\left(r\right)$ over bags containing both~$i$ and~$j$, and similarly $\hat{y}^{i\setminus j}$ averages over bags containing~$i$ and not~$j$. Similarly, $\hat{y}^j = \frac{m-1}{n-1}\hat{y}^{ij} + \frac{n-m}{n-1}\hat{y}^{j\setminus i}$, and therefore, $\hat{y}^i - \hat{y}^j = \frac{n-m}{n-1}\left(\hat{y}^{i\setminus j}-\hat{y}^{j\setminus i}\right)$. We write~$\hat{y}^{i\setminus j}$ and~$\hat{y}^{j\setminus i}$ as hypergeometric tail probabilities: 
\begin{align*}
    \hat{y}^{i\setminus j}
    &= \P_{H\sim \textnormal{HyperGeometric}\left(n-2, K-1, m-1\right)}\left\{1+H > \frac{mK}{n}\right\} \\
    \hat{y}^{j\setminus i}
    &= \P_{H\sim \textnormal{HyperGeometric}\left(n-2, K-1, m-1\right)}\left\{H > \frac{mK}{n}\right\}. 
\end{align*}
Combining our findings,
\begin{align*}
    \hat{y} -\hat{y}^{\setminus i}
    &= \frac{p\left(1-\eta\right)}{1-p}\frac{n-m}{n-1}\left(\hat{y}^{i\setminus j}-\hat{y}^{j\setminus i}\right) \\
    &= \frac{m\left(1-\eta\right)}{n-1}\P_{H\sim \textnormal{HyperGeometric}\left(n-2, K-1, m-1\right)}\left\{H = \left\lfloor \frac{mK}{n}\right\rfloor\right\} 
\end{align*}
Now let $h = \lfloor \frac{mK}{n}\rfloor$. Recalling $\eta = \frac{K}{n}$,
\begin{align*}
    \hat{y} -\hat{y}^{\setminus i}
    &=\frac{m\left(1-\eta\right)}{n-1}\P_{H\sim \textnormal{HyperGeometric}\left(n-2, K-1, m-1\right)}\left\{H = h\right\} \\
    &=\frac{m\left(1-\eta\right)}{n-1}\frac{{K-1\choose h}{n-K-1\choose m-h-1}}{{n-2 \choose m-1}} \\
    &=\frac{m\left(1-\eta\right)}{n-1}\frac{{K-1\choose h}{n-K\choose m-h}\frac{m-h}{n-K}}{{n-1 \choose m} \frac{m}{n-1}} \\
    &=\frac{m-h}{n}\frac{{K-1\choose h}{n-K\choose m-h}}{{n-1 \choose m}} \\
    &= \frac{m-\left\lfloor \frac{mK}{n}\right\rfloor}{n}\P_{H\sim \textnormal{HyperGeometric}\left(n-1, K-1, m\right)}\left\{H = \left\lfloor \frac{mK}{n}\right\rfloor\right\}\\
     &\geq \left(1-\delta-n^{-1}\right)p\P_{H\sim \textnormal{HyperGeometric}\left(n-1, K-1, m\right)}\left\{H = \lfloor p\left(1+\lfloor n\delta\rfloor\right)\rfloor\right\}\\
   & >\varepsilon,
\end{align*}
where the last step holds by assumption on $\varepsilon$. This verifies that $\left(\varepsilon, \delta\right)$-stability fails to hold, and thus completes the proof.

\subsection{Proof of Theorem~\ref{thm:unbounded-general}}
    This result follows from the proof of~\cref{thm:upper} if we substitute \cref{eq:Cauchy-Schwarz} with
    \[
     L_1\left(\mathcal{K}\right)
    \le \frac{R^*\left(\cd, x\right)}{2\left(1-p\right)}\sqrt{\textnormal{Var}\left(\sum_is_i\mathbf{1}_{i\not\in r}\right)}. 
    \]

\subsection{Proof of Theorem~\ref{thm:unbounded}} By the triangle inequality,
\begin{align*}
        \frac{1}{n}\sum_{i=1}^n&\mathbf{1}\left\{\left|\hat{f}_{\infty,I}\left(x\right) - \hat{f}_{\infty,I}^{\setminus i}\left(x\right)\right| > \varepsilon\,\textnormal{length}\left(I\left(\cd\right)\right)\right\} \\ 
        &\le \frac{1}{n}\sum_{i=1}^n\mathbf{1}\left\{\frac{\left|\hat{f}_{\infty,I}\left(x\right) - \hat{f}_{\infty,I}^{\setminus i}\left(x\right)\right|}{\cR\left(\cd, x\right)} > \varepsilon, I\left(\cd\right) = I\left(\cd^{\setminus i}\right)\right\}\\
        &~~~~~+\frac{1}{n}\sum_{i=1}^n\mathbf{1}\left\{I\left(\cd\right)\ne I\left(\cd^{\setminus i}\right)\right\}.
\end{align*}
It suffices to show that the first term on the right-hand side is at most~$\delta$.

Let $I_0 := I\left(\cd\right)$ denote the interval based on the full data set. Define a new algorithm $\ca^*$ that clips the output to $I_0$ regardless of the input data set. That is, for any data set $\cd'$ and test point $x\in \cx$, 
\[
\ca^*\left(\cd'\right)\left(x\right) := \frac{\E_\xi\left[\textnormal{Clip}_{I_0}\left(\ca\left(\cd';\xi\right)\left(x\right)\right)\right] - \inf I_0}{\textnormal{length}\left(I_0\right)}.
\]
This modified base algorithm has bounded outputs---that is, $\ca^*\left(\cd'\right)\left(x\right)\in [0,1]$. Let $\hat{f}_\infty^* = \widetilde\ca^*_\infty\left(\cd\right)$ denote the result of derandomized bagging (\cref{alg:bagging-derand}) on the modified base algorithm~$\ca^*$, and similarly let $\hat{f}_\infty^{*\setminus i} = \widetilde\ca^*_\infty\left(\cd^{\setminus i}\right)$. For any $i\in [n]$, on the event $I\left(\cd\right) = I\left(\cd^{\setminus i}\right)$, we have
\[
\frac{\left|\hat{f}_{\infty,I}\left(x\right) - \hat{f}_{\infty,I}^{\setminus i}\left(x\right)\right|}{\cR\left(\cd, x\right)}
= \left|\hat{f}_\infty^{*}\left(x\right) - \hat{f}_\infty^{*\setminus i}\left(x\right)\right|.
\]
Hence, applying \cref{thm:upper} to the modified base algorithm~$\ca^*$, 
\[
\frac{1}{n}\sum_{i=1}^n\mathbf{1}\left\{\frac{\left|\hat{f}_{\infty,I}\left(x\right) - \hat{f}_{\infty,I}^{\setminus i}\left(x\right)\right|}{\cR\left(\cd, x\right)} > \varepsilon, I\left(\cd\right) = I\left(\cd^{\setminus i}\right)\right\}
\le \frac{1}{n}\sum_{i=1}^n\mathbf{1}\left\{\left|\hat{f}_\infty^{*}\left(x\right) - \hat{f}_\infty^{*\setminus i}\left(x\right)\right| > \varepsilon\right\}
\le \delta,
\]
completing the proof.

\subsection{Proof of Theorem~\ref{thm:worst}} \emph{(i)} Fix~$\ca, \cd, x$, and let~$\hat{f}_\infty^i=\E_{r\sim \cq_n, \xi}\left[\hat{f}^{\left(r\right)} \mid i\in r\right]$. Observe that $\hat{f} = p\hat{f}_\infty^i + \left(1-p\right)\hat{f}_\infty^{\setminus i}$, so 
\[
\left|\hat{f}_\infty\left(x\right) - \hat{f}_\infty^{\setminus i}\left(x\right)\right|
= p\left|\hat{f}_\infty^i\left(x\right) - \hat{f}_\infty^{\setminus i}\left(x\right)\right|
\le p
\]
for all $i$.
This proves $\left(p,0\right)$-stability, and therefore, $\left(p,\delta\right)$-stability holds for all $\delta$.

\emph{(ii)} Let~$\ca^\dagger\left(\cd\right)\left(x\right) = \mathbf{1}\left\{\exists \left(\tilde{x}, \tilde{y}\right)\in \cd : \tilde{x} = x\right\}$---i.e., the algorithm checks whether~$x$ belongs to the training bag. Take~$\cd = \left(x_i, y_i\right)_{i=1}^n$ such that every~$x_i$ is unique. Then, for each $i$, $\hat{f}_\infty^{\setminus i}\left(x_i\right) = 0$, whereas $\hat{f}_\infty\left(x_i\right) =\p$.

\subsection{Proof of Corollary~\ref{cor:expectation}} Integrating Hoeffding's inequality,
\[
\E_{\pmb{\xi}, \pmb{r}}\left|\hat{f}_B\left(x\right) - \hat{f}_\infty\left(x\right)\right|
\le \int_0^12\exp\left(-2Bt^2\right)\text{d}t
\le \sqrt{\frac{\pi}{2B}},
\]
and similarly for $\left|\hat{f}_B^{\setminus i}\left(x\right) - \hat{f}_\infty^{\setminus i}\left(x\right)\right|$.

By \cref{thm:general-lp} (given below),
\begin{align*}
    &\frac{1}{n}\sum_{i=1}^n\left|\hat{f}_\infty\left(x\right) - \hat{f}_\infty^{\setminus i}\left(x\right)\right| 
    \le \sqrt{\frac{1}{4n}\left(\frac{\p}{1-\p} + \frac{\q}{\left(1-\p\right)^2}\right)}.
\end{align*}
Combining these bounds via the triangle inequality completes the proof.

\subsection{Proof of Corollary~\ref{cor:loss-stability}} Since the loss is $L$-Lipschitz,
\begin{align*}
\E_{\left(Z_i\right)_{i=1}^{n+1}\simiid P, \pmb{\xi}, \pmb{r}}&\left|\ell\left(\hat{f}_B\left(X_{n+1}\right), Y_{n+1}\right) - \ell\left(\hat{f}_B^{\setminus i}\left(X_{n+1}\right), Y_{n+1}\right)\right| \\
&\le L\,\E_{\left(Z_i\right)_{i=1}^{n+1}\simiid P, \pmb{\xi}, \pmb{r}}\left|\hat{f}_B\left(X_{n+1}\right) - \hat{f}_B^{\setminus i}\left(X_{n+1}\right)\right| \\
&= L\,\E_{\left(Z_i\right)_{i=1}^{n+1}\simiid P}\left[\frac{1}{n}\sum_{j=1}^n\E_{\pmb{\xi}, \pmb{r}}\left|\hat{f}_B\left(X_{n+1}\right) - \hat{f}_B^{\setminus j}\left(X_{n+1}\right)\right|\right] 
\end{align*}
The result follows upon applying Corollary~\ref{cor:expectation}.

\section{Stability Guarantee in \texorpdfstring{$\ell_k$}{General Lk Norms}}

In the main text, we establish guarantees for both worst-case and average-case stability. These two notions can be viewed as the $\ell_\infty$ and $\ell_1$ norms (respectively) of the sequence of leave-one-out perturbations~$\left(\left|\hat{f}\left(x\right) - \hat{f}^{\setminus i}\left(x\right)\right|\right)_{i=1}^n$. Our next result interpolates between these two settings by providing a guarantee for the~$\ell_k$ norm for any~$k > 0$. We state the result in the derandomized case for simplicity.

\begin{theorem}\label{thm:general-lp} In the setting of \cref{thm:upper}, define
    \[
    C\left(\cq_n\right) := \min\left\{\sqrt{\frac{1}{4n}\left(\frac{p}{1-p} + \frac{q}{\left(1-p\right)^2}\right)}, p\right\}.
    \]
Suppose $C\left(\cq_{n}\right)\le p$. Then, for any $k > 0$, derandomized bagging~$\widetilde{\ca}_\infty$ satisfies
\[
\left(\frac{1}{n}\sum_{i=1}^n\left|\hat{f}_\infty\left(x\right) - 
\hat{f}_\infty^{\setminus i}\left(x\right)\right|^k\right)^{1/k}
\le C\left(\cq_{n}\right)^{2/\max\left\{k, 2\right\}}p^{1-2/\max\left\{k,2\right\}}
\]
\end{theorem}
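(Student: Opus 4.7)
The plan is to reduce to the $\ell_2$ case and then interpolate against the trivial uniform bound $a_i \le p$. Write $a_i := |\hat{f}_\infty(x) - \hat{f}_\infty^{\setminus i}(x)|$ and $s_i := \textnormal{sign}(\hat{f}_\infty(x) - \hat{f}_\infty^{\setminus i}(x))$. I will first show $\frac{1}{n}\sum_{i=1}^n a_i^2 \le C(\cq_n)^2$, and then combine it with $a_i \le p$ (from \cref{thm:worst}(i)) to cover all $k > 0$.

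For the $\ell_2$ bound I would adapt the proof of \cref{thm:upper} almost verbatim, replacing the sign-indicator coefficients $s_i \mathbf{1}\{i \in \mathcal{K}\}$ by the real-valued weights $c_i := a_i s_i$. The identity $c_i(\hat{y} - \hat{y}^{\setminus i}) = a_i^2$ gives $\sum_i a_i^2 = \sum_i c_i(\hat{y} - \hat{y}^{\setminus i})$. The same conditional-expectation decomposition $\hat{y} - \hat{y}^{\setminus i} = \frac{1}{1-p}\E_{r}\bigl[(\hat{y} - \hat{y}^{(r)})\mathbf{1}\{i\notin r\}\bigr]$, followed by recentering and Cauchy--Schwarz (using Popoviciu's $\textnormal{Var}(\hat{y}^{(r)}) \le 1/4$), yields
\[
\sum_i a_i^2 \le \frac{1}{2(1-p)} \sqrt{\textnormal{Var}\Big(\sum_i c_i \mathbf{1}\{i \notin r\}\Big)}.
\]
The variance is at most $(p(1-p) + q)\sum_i c_i^2 - q(\sum_i c_i)^2 \le (p(1-p)+q)\sum_i a_i^2$, since $q \ge 0$ and $\sum_i c_i^2 = \sum_i a_i^2$, exactly as in \cref{thm:upper}. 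Rearranging (and combining with the trivial estimate $\sum_i a_i^2 \le n p^2$ to absorb the $\min$ in the definition of $C(\cq_n)$) produces $\sum_i a_i^2 \le n C(\cq_n)^2$.

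For the interpolation, when $k \le 2$ the power-mean inequality gives $\bigl(\tfrac{1}{n}\sum_i a_i^k\bigr)^{1/k} \le \bigl(\tfrac{1}{n}\sum_i a_i^2\bigr)^{1/2} \le C(\cq_n)$, which matches the target since $2/\max\{k,2\} = 1$. When $k > 2$, H\"older's inequality in the form $\sum_i a_i^k \le \|a\|_\infty^{k-2} \sum_i a_i^2$, together with $\|a\|_\infty \le p$ and the $\ell_2$ bound, gives $\sum_i a_i^k \le n p^{k-2} C(\cq_n)^2$, matching the target since $2/\max\{k,2\} = 2/k$ in this regime. The main obstacle is essentially conceptual: noticing that the Cauchy--Schwarz argument of \cref{thm:upper} depends on its weights only through $\sum_i c_i^2$ and $(\sum_i c_i)^2$, and so extends at once to arbitrary real weights; once this is recognized, the $\ell_2$ bound drops out and the $\ell_k$ interpolation is routine.
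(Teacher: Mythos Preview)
Your proposal is correct and follows essentially the same route as the paper: the paper likewise weights by $s_iL_i$ (your $c_i$), applies the same conditional-expectation identity and Cauchy--Schwarz/Popoviciu step to obtain the $\ell_2$ bound, invokes \cref{thm:worst}(i) for $\max_i a_i\le p$, and then interpolates via power-mean monotonicity for $k\le 2$ and the factorization $\sum_i a_i^k\le p^{k-2}\sum_i a_i^2$ for $k>2$.
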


This result interpolates between some of our main stability guarantees. For instance, as $k\to\infty$,~\cref{thm:general-lp} yields
    \[
    \max_{i=1,\ldots,n} \left|\hat{f}_\infty\left(x\right) - 
\hat{f}_\infty^{\setminus i}\left(x\right)\right|
        \le p,
    \]
    recovering part \textit{(i)} of \cref{thm:worst}. Corollary~\ref{cor:expectation} covers the special case $k=1$
     \[
        \frac{1}{n}\sum_{i=1}^n\left|\hat{f}_\infty\left(x\right) - 
\hat{f}_\infty^{\setminus i}\left(x\right)\right|
        \le C\left(\cq_{n}\right),
    \]
    in the derandomized setting~$\left(B\to\infty\right)$. Finally, setting $k=2$, by Markov's inequality,
    \[
    \frac{1}{n}\sum_{i=1}^n \mathbf{1}\left\{\left|\hat{f}_\infty\left(x\right) - \hat{f}_\infty^{\setminus i}\left(x\right)\right| \ge \varepsilon\right\} \le \frac{C^2\left(\cq_{n}\right)}{\varepsilon^2},
    \]
    recovering \cref{thm:upper}. \\

\begin{proof} We first prove the result for $k=2$. We use the same notation as in the proof of \cref{thm:upper}, additionally defining $L_i = \left|\hat{y} - 
\hat{y}^{\setminus i}\right|$ and $s_i = \text{sign}\left(\hat{y} - 
\hat{y}^{\setminus i}\right)$. Following the same line of reasoning as in the proof of \cref{thm:upper}, 
\begin{align*}
    \|\Vec{L}\|_2^2
    &= \sum_{i=1}^n s_iL_i \cdot\left(\hat{y}-\hat{y}^{\setminus i}\right) \\
    &= \sum_{i=1}^n s_iL_i \cdot\mathbb{E}_r\left[\hat{y} - 
\hat{y}^{\left(r\right)}\big| i\not\in r\right] \\
    &= \mathbb{E}_r\left[\frac{1}{1-p}\sum_{i=1}^n s_iL_i \cdot\left(\hat{y} - 
\hat{y}^{\left(r\right)}\right)\mathbf{1}_{i\notin r}\right] \\
    &= \mathbb{E}_r\left[\left(\hat{y} - 
\hat{y}^{\left(r\right)}\right)\frac{1}{1-p}\sum_{i=1}^n s_iL_i \cdot\left(\mathbf{1}_{i\notin r} - p\right)\right] \\
    &\le \sqrt{\textnormal{Var}_r\left[\hat{y}^{\left(r\right)}\right] \cdot \textnormal{Var}_r\left[\frac{1}{1-p}\sum_{i=1}^n s_iL_i \cdot\left(\mathbf{1}_{i\notin r} - p\right)\right]} \\
    &\le \frac{1}{2\left(1-p\right)}\sqrt{\textnormal{Var}_r\left[\sum_{i=1}^n s_iL_i \mathbf{1}_{i\notin r}\right]}.
\end{align*}
Expanding the variance term,
\begin{align*}
    \textnormal{Var}\left(\sum_is_iL_i\mathbf{1}_{i\not\in r}\right)
    &= p\left(1-p\right)\sum_i s_i^2L_i^2 - q\sum_{i\ne j} s_iL_is_jL_j  \\
    &= \left(p\left(1-p\right) + q\right)\sum_i s_i^2L_i^2 - q\Big(\sum_{i} s_iL_i\Big)^2 \\
    &\le \left(p\left(1-p\right) + q\right)\sum_{i}L_i^2.
\end{align*}
After some rearranging, we have $\sqrt{\frac{1}{n}\sum_{i=1}^n L_i^2} \le \sqrt{\frac{1}{4n}\left(\frac{p}{1-p} + \frac{q}{\left(1-p\right)^2}\right)}.$ By part \textit{(i)} of \cref{thm:worst}, we have $\sqrt{\frac{1}{n}\sum_{i=1}^n L_i^2}\le C\left(\cq_{n}\right)$. Since $\left(\frac{1}{n}\sum_{i=1}^n L_i^k\right)^{1/k}$ is monotone in $k$, this also implies the result for $k\le 2$. For $k > 2$, we again use $\sqrt{\frac{1}{n}\sum_{i=1}^n L_i^2}\le C\left(\cq_{n}\right)$ and $\max_i L_i\le p$:
\[
\left(\frac{1}{n}\sum_{i=1}^n L_i^k\right)^{1/k}
\le \left(\frac{1}{n}\sum_{i=1}^n L_i^2 p^{k-2}\right)^{1/k}
\le C\left(\cq_{n}\right)^{2/k}p^{1-2/k},
\]
completing the proof.
\end{proof}

\section{Unbounded Outputs with Finite \texorpdfstring{$B$}{B}}\label{sec-finite-B-unbdd}

In this section, we present analogous results to \cref{thm:unbounded-general,thm:unbounded} for the finite~$B$ case.

\begin{theorem}\label{thm:finite-B-unbdd} Let~$\pred=\R$. Fix a distribution~$\mathcal{Q}_n$ on~$\seqn$ satisfying Assumptions~\ref{assumption:symmetry} and~\ref{assumption:nondegeneracy}, and let $\cq_{n-1}$ be defined as in~\eqref{eqn-define-Qn-Qn1}. Let $\left(\varepsilon, \delta\right)$ satisfy \cref{eq:eps-delta} and fix $\delta'>0$. For any algorithm~$\ca$, generic bagging~$\widetilde{\ca}_B$ is $\left(\varepsilon+\sqrt{\frac{2}{B}\log\left(\frac{4}{\delta'}\right)}, \delta+\delta', \bar{\cR}\right)$-stable, where 
\begin{align}
\bar{\cR}\left(\cd, x\right) :=\sup_{r\in \seqn, \xi\in\left[0,1\right]}\ca\left(\cd_{r}; \xi\right)\left(x\right)- \inf_{r\in \seqn, \xi\in\left[0,1\right]}\ca\left(\cd_{r}; \xi\right)\left(x\right).
\end{align}
\end{theorem}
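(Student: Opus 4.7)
The plan is to combine the derandomized-to-finite-$B$ reduction used in the proof of \cref{thm:upper-random} with the unbounded generalization of \cref{thm:unbounded-general}, keeping careful track of a data-dependent scale $\bar{\cR}(\cd,x)$.

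First I would apply \cref{thm:unbounded-general} to bound the leave-one-out perturbation of the derandomized version: since $\left(\varepsilon,\delta\right)$ satisfies \cref{eq:eps-delta}, the fraction of indices $i\in[n]$ for which $|\hat{f}_\infty(x)-\hat{f}_\infty^{\setminus i}(x)|>\varepsilon\,\cR^*(\cd,x)$ is at most $\delta$. Noting that $\cR^*(\cd,x)\le \textnormal{Range}(\cd,x)\le \bar{\cR}(\cd,x)$, the same fraction bound holds with $\cR^*$ replaced by $\bar{\cR}$. This handles the ``middle'' term in a triangle-inequality decomposition
\[
|\hat{f}_B(x)-\hat{f}_B^{\setminus i}(x)|\le |\hat{f}_B(x)-\hat{f}_\infty(x)|+|\hat{f}_\infty(x)-\hat{f}_\infty^{\setminus i}(x)|+|\hat{f}_\infty^{\setminus i}(x)-\hat{f}_B^{\setminus i}(x)|.
\]

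Next I would handle the two Monte Carlo terms via Hoeffding's inequality. For fixed $\cd$ and fixed $x$, each summand $\hat{f}^{(b)}(x)=\ca(\cd_{r^{(b)}};\xi^{(b)})(x)$ in the $B$-bag average lies in an interval of length at most $\bar{\cR}(\cd,x)$, so Hoeffding gives
\[
\P_{\pmb{r},\pmb{\xi}}\Bigl\{|\hat{f}_B(x)-\hat{f}_\infty(x)|>t\,\bar{\cR}(\cd,x)\Bigr\}\le 2\exp(-2Bt^2),
\]
and choosing $t=\sqrt{\frac{1}{2B}\log(4/\delta')}$ makes this at most $\delta'/2$. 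The leave-one-out counterpart uses $\bar{\cR}(\cd^{\setminus i},x)\le \bar{\cR}(\cd,x)$, which holds because the supremum/infimum defining $\bar{\cR}(\cd^{\setminus i},x)$ is taken over bags drawn from a sub-multiset of the training points, hence over a subset of predictions available to $\cd$.

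Combining the three pieces via triangle inequality and a union bound, exactly as in the proof of \cref{thm:upper-random}, the average-case fraction with $|\hat{f}_B(x)-\hat{f}_B^{\setminus i}(x)|>\bigl(\varepsilon+\sqrt{\tfrac{2}{B}\log(4/\delta')}\bigr)\bar{\cR}(\cd,x)$ is bounded by $\delta+\delta'/2+\delta'/2=\delta+\delta'$, which is exactly $(\varepsilon+\sqrt{\tfrac{2}{B}\log(4/\delta')},\delta+\delta',\bar{\cR})$-stability. The main subtlety I would flag is verifying the monotonicity $\bar{\cR}(\cd^{\setminus i},x)\le \bar{\cR}(\cd,x)$ in the unbounded setting, and making sure Hoeffding is applied conditionally on $\cd$ so that the data-dependent range $\bar{\cR}(\cd,x)$ is a fixed constant for the purposes of the concentration bound.
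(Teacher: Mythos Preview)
Your proposal is correct and follows essentially the same approach as the paper, which simply states that the result ``is proved the same way as \cref{thm:upper-random}, where we apply \cref{thm:unbounded-general} instead of \cref{thm:upper}.'' You have filled in the details carefully---in particular, the monotonicity $\bar{\cR}(\cd^{\setminus i},x)\le \bar{\cR}(\cd,x)$ needed for the leave-one-out Hoeffding term is a detail the paper leaves implicit, and your justification for it is correct.
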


\cref{thm:finite-B-unbdd} is proved the same way as \cref{thm:upper-random}, where we apply \cref{thm:unbounded-general} instead of \cref{thm:upper}. Next, we present our result for adaptively clipped bagging in the finite-$B$ regime.

\begin{theorem}
    Let~$\pred=\R$. Fix a distribution~$\mathcal{Q}_n$ on~$\seqn$ satisfying Assumptions~\ref{assumption:symmetry} and~\ref{assumption:nondegeneracy}, and let $\cq_{n-1}$ be defined as in~\eqref{eqn-define-Qn-Qn1}. Suppose the mapping to intervals~$I$ satisfies
    \begin{align}
    \frac{1}{n}\sum_{i=1}^n \mathbf{1}\left\{I\left(\cd\right)\ne I\left(\cd^{\setminus i}\right)\right\} \le \delta_I.
    \end{align}
    Let $\cR\left(\cd, x\right) = \textnormal{length}\left(I\left(\cd\right)\right)$, let $\left(\varepsilon, \delta\right)$ satisfy \cref{eq:eps-delta} and fix $\delta' > 0$. For any algorithm~$\ca$, adaptively clipped bagging $\widetilde{\ca}_{B,I}$ is $\left(\varepsilon + \sqrt{\frac{2}{B}\log\left(\frac{4}{\delta'}\right)}, \delta_I + \delta + \delta', R\right)$-stable.
\end{theorem}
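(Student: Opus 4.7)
The plan is to combine the derandomized clipped-bagging guarantee of \cref{thm:unbounded} with a Hoeffding concentration bound for the Monte Carlo error, mirroring the structure of the proof of \cref{thm:upper-random}. The first step would be to apply the triangle inequality,
\[
\left|\hat{f}_{B,I}(x) - \hat{f}_{B,I}^{\setminus i}(x)\right| \le \left|\hat{f}_{B,I}(x) - \hat{f}_{\infty,I}(x)\right| + \left|\hat{f}_{\infty,I}(x) - \hat{f}_{\infty,I}^{\setminus i}(x)\right| + \left|\hat{f}_{\infty,I}^{\setminus i}(x) - \hat{f}_{B,I}^{\setminus i}(x)\right|,
\]
so that the middle term is the derandomized perturbation controlled by \cref{thm:unbounded}, while the outer two become pure Monte Carlo errors.

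For the outer two terms I would use Hoeffding conditional on $\cd$: the clipped predictions $\textnormal{Clip}_{I(\cd)}(\hat{f}^{(b)}(x))$ are iid across $b$ and take values in the interval $I(\cd)$ of length $\cR(\cd, x)$, so
\[
\P\left\{\left|\hat{f}_{B,I}(x) - \hat{f}_{\infty,I}(x)\right| > \cR(\cd, x)\sqrt{\tfrac{1}{2B}\log\tfrac{4}{\delta'}}\right\} \le \tfrac{\delta'}{2},
\]
and an analogous bound holds for $\hat{f}_{B,I}^{\setminus i}(x)$ around $\hat{f}_{\infty,I}^{\setminus i}(x)$ with scale $\cR(\cd^{\setminus i}, x)$. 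The subtle point is that these two scales may differ, but only on indices $i$ where $I(\cd)\ne I(\cd^{\setminus i})$, and by hypothesis there are at most $n\delta_I$ such indices. I would therefore split the average over $i$ into a mismatched part (directly charged to the $\delta_I$ budget) and a matched part where both scales equal $\cR(\cd, x)$, so that the two Hoeffding slacks collapse to $\cR(\cd, x)\cdot 2\sqrt{\tfrac{1}{2B}\log\tfrac{4}{\delta'}} = \cR(\cd, x)\sqrt{\tfrac{2}{B}\log\tfrac{4}{\delta'}}$, which is precisely the $\varepsilon$-inflation appearing in the conclusion.

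Putting it together via a union bound and averaging over $i$, the failure rate splits into $\delta_I$ (mismatched intervals), plus $\delta$ (failure of the derandomized perturbation bound on matched indices), plus $\delta'$ (failure of the two Hoeffding events, each contributing $\delta'/2$). The main obstacle is the bookkeeping: I need to verify that restricting \cref{thm:unbounded} to matched-interval indices yields a bound of $\delta$ rather than $\delta+\delta_I$. This is indeed the case because the proof of \cref{thm:unbounded} already isolates the $\delta_I$ charge to the mismatched indices (via the $\ca^*$ reduction to a fixed-interval base algorithm and an appeal to \cref{thm:upper}), so the matched-index contribution is bounded by $\delta$ alone, yielding the claimed $(\varepsilon + \sqrt{(2/B)\log(4/\delta')},\,\delta_I+\delta+\delta',\,\cR)$-stability.
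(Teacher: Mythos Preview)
Your proposal is correct and follows essentially the same route as the paper. The paper first isolates the mismatched-interval indices (charging $\delta_I$) and then, on the matched indices, invokes the $\ca^*$ reduction from the proof of \cref{thm:unbounded} together with \cref{thm:upper-random}; you instead apply the triangle inequality first and then split into matched/mismatched indices, but the ingredients (Hoeffding on clipped predictions, the $\ca^*$ reduction, and the observation that on matched indices both scales coincide) are identical.
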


\begin{proof} As in the proof of \cref{thm:unbounded}, 
\begin{align*}
        \frac{1}{n}\sum_{i=1}^n&\,\P\left\{\left|\hat{f}_{B,I}\left(x\right) - \hat{f}_{B,I}^{\setminus i}\left(x\right)\right| > \varepsilon\,\cR\left(\cd, x\right)\right\} \\ 
        &\le \delta_I + \frac{1}{n}\sum_{i=1}^n\P\left\{\frac{\left|\hat{f}_{B, I}\left(x\right) - \hat{f}_{B,I}^{\setminus i}\left(x\right)\right|}{\cR\left(\cd,x\right)} > \varepsilon, I\left(\cd\right) = I\left(\cd^{\setminus i}\right)\right\}.
\end{align*}
From this point on, following the same arguments as in the proof of Theorem 13, completing the proof via an application of \cref{thm:upper-random}.
\end{proof}

\bibliographystyle{apalike} 
\bibliography{reference}

\end{document}